\newtheorem{post}{Postulate}
\newtheorem{assp}{Assumption}
\newtheorem{dfnt}{Definition}
\newtheorem{prop}{Proposition}
\newtheorem{lema}{Lemma}
\DeclareMathOperator{\tr}{tr}
\title{A Kernel Embedding-based Approach for Nonstationary Causal Model Inference}
\author{
    Shoubo Hu{\small $^{\ast}$}, Zhitang Chen{\small $^{\dagger}$}, Laiwan Chan{$^{\ast}$} \\
  $^{\ast}$The Chinese University of Hong Kong; $^{\dagger}$Huawei Noah's Ark Lab \\
  $^{\ast}$\texttt{\{sbhu, lwchan\}@cse.cuhk.edu.hk}; $^{\dagger}$\texttt{chenzhitang2@huawei.com }
}
\begin{document}
\maketitle

\begin{abstract}
Although nonstationary data are more common in the real world, most existing causal discovery methods do not take nonstationarity into consideration. In this letter, we propose a kernel embedding-based approach, ENCI, for nonstationary causal model inference where data are collected from multiple domains with varying distributions. In ENCI, we transform the complicated relation of a cause-effect pair into a linear model of variables of which observations correspond to the kernel embeddings of the cause-and-effect distributions in different domains. In this way, we are able to estimate the causal direction by exploiting the causal asymmetry of the transformed linear model. Furthermore, we extend ENCI to causal graph discovery for multiple variables by transforming the relations among them into a linear nongaussian acyclic model. We show that by exploiting the nonstationarity of distributions, both cause-effect pairs and two kinds of causal graphs are identifiable under mild conditions. Experiments on synthetic and real-world data are conducted to justify the efficacy of ENCI over major existing methods.
\end{abstract}


\section{Introduction}
Causal inference has been given rise to extensive attention and applied in several areas including statistics, neuroscience and sociology in recent years. An efficient approach for causal discovery is to conduct randomized controlled experiments. These experiments, however, are usually very expensive and sometimes practically infeasible. Therefore, causal inference methods using passive observational data take center stage, and many of them have been proposed, especially in the past ten years.

Existing causal inference methods that use passive observational data can be roughly categorized into two classes according to their objectives. One class of methods aim at identifying the variable that is the cause of the other in a variable pair \citep{hoyer2009nonlinear,zhang2009identifiability,janzing2012information, chen2014causal}, which is often termed a cause-effect pair. Most of the methods in this class first model the relation between the cause and the effect using a functional model with certain assumptions. Then they derive a certain property which only holds in the causal direction and is violated in the anticausal direction to infer the true causal direction. This kind of widely used property is often termed cause-effect asymmetry. For example, the additive noise model (ANM) \citep{hoyer2009nonlinear} represents the effect as a function of the cause with an additive independent noise: $Y = f(X) + E_{Y}$. The authors showed that there is no model of the form $X = g(Y) + E_{X}$ that admits an ANM in the anticausal direction for most combinations $\left(f, p(X), p(E_{Y})\right)$. Therefore, the inference of ANM is done by finding the direction that fits ANM better. Similar methods include postnonlinear model (PNL) \citep{zhang2009identifiability} and information geometric causal inference (IGCI) \citep{janzing2012information}. Recently, a kernel-based, EMD (or abbreviation for EMbeDding) \citep{chen2014causal} using the framework of IGCI is proposed. EMD differs from the previous methods in the sense that it does not assume any specific functional model, but it still resorts to find the cause-effect asymmetry.

The other class of methods aims at recovering the structure of causal graphs. Constraint-based methods \citep{spirtes1991algorithm, spirtes1991probability, perlcausality, spirtes2000causation, Cheng200243}, which belong to this class, exploit the causal Markov condition and have been widely used in the social sciences, medical science, and bioinformatics. However, these methods allow one only to obtain the Markov equivalent class of the graph and are of high computational cost. In 2006, a linear nongaussian acyclic model (LiNGAM) \citep{shimizu-et-al:lingam} which exploits the nongaussian property of the noise, was showed to be able to recover the full causal structure by using independent component analysis (ICA) \citep{comon1994independent, hyvarinen2000independent}. To avoid the problem that ICA may result in a solution of local optima, different methods \citep{shimizu2011directlingam, hyvarinen2013pairwise} were proposed to guarantee the correctness of the causal order of variables in the causal graph.

Both classes of existing methods are based on the assumption that all observations are sampled from a fixed causal model. By ``fixed causal model,'' we mean that the (joint) distribution of variables and the mechanism mapping cause(s) to effect(s) are unchanged during the data collecting process. For example in an ANM $Y = f(X) + E_{Y}$, both the distribution of the cause $p(X)$ and the causal mechanism $f$ are assumed to be fixed. Although some of these methods do achieve inspiring results and provide valuable insights for subsequent research, data generated from a varying causal model are much more common in practice and existing methods based on a fixed causal model would come across some problems when applied to varying causal models \citep{zhang2015discovery}. Therefore, we consider causal models where distributions of variables and causal mechanisms vary across domains or over different time periods and call these models \emph{non-stationary causal models}. An example is the model of daily returns of different stocks. The distribution of the return of each stock varies with the financial status, and the causal mechanisms between different stocks also vary according to the relations between these companies. Recently, a method called Enhanced Constraint-based Procedure (ECBP) was proposed for causal inference of non-stationary causal models \citep{zhang2015discovery}. The authors resorted to an index variable $C$ to quantify the nonstationarity and proposed ECBP, which is built on constraint-based methods to recover the skeleton of the augmented graph, which consists of both observed variables $\mathbf{V}$ and some unobserved quantities determined by $C$. They also showed that it is possible to infer the parent nodes of variables adjacent to $C$ (termed \emph{$C$-specific variables}) and proposed a measure to infer the causal direction between each $C$-specific variable and its parents. However, their method fails to ensure the recovery of the full causal structure, which is due to the limitation of methods that rely on conditional independence test. In contrast, our method, which is proposed originally for cause-effect pairs inference, is also extended to infer the complete causal structure of two kinds of graphs by transforming the nonstationarity into a LiNGAM model.

In this paper, we introduce a nonstationary causal model and develop algorithms, which we call \emph{embedding-based nonstationary causal model inference} (ENCI) for inferring the complete causal relations of the model. Our model assumes that the underlying causal relations (i.e. the causal direction of a cause-effect pair or the causal structure of a graph) are fixed, whereas the distributions of variables and the causal mechanisms (i.e. the conditional distribution of the effect given the cause(s)) change across domains or over different time periods. To infer the nonstationary causal model, ENCI reformulates the relation among variables into a linear model in the Reproducing Kernel Hilbert Space (RKHS) and leverages the identifiability of the linear causal model to tackle the original complicated problem. Specifically, for a cause-effect pair, we embed the variation of the density of each variable into an RKHS to transform the original unknown causal model to a linear nongaussian additive model \citep{kano2003causal} based on the independence between the mechanism generating the cause and the mechanism mapping the cause to the effect. Then we infer the causal direction by exploiting the causal asymmetry of the obtained linear model. We also extend our approach to discover the complete causal structure of two kinds of causal graphs in which the distribution of each variable and the causal mechanism mapping cause(s) to effect(s) vary and the causal mechanism could be nonlinear. 

This paper is organized as follows. In section 2, we formally define our model and objective of causal inference. In section 3, some preliminary knowledge of reproducing kernel Hilbert space embedding is introduced. In section 4, we elaborate our methods for cause-effect pairs. In section 5, we extend our methods to two kinds of causal graphs. In section 6, we report experimental results on both synthetic and real-world data to show the advantage of our approach over existing ones.

\section{Problem Description}
In this section we formalize the nonstationary causal model and the objective of our causal inference task. For a pair of variable $X$ and $Y$, we consider the case where $X$ is the cause and $Y$ is the effect without loss of generality throughout this paper.

\subsection{Non-stationary Causal Model}
We assume the data generating process of a cause-effect pair fulfills the following properties: 
\begin{itemize}
	\item The causal direction between $X$ and $Y$ stays the same throughout the process.
	\item Observations are collected from $N$ different domains. The density of the cause $\left(p(X)\right)$ and the conditional density of the effect given the cause $\left(p(Y|X)\right)$ are fixed within each domain.
	\item $p(X)$ and $p(Y|X)$ vary in different domains.
\end{itemize}
We call this a \emph{nonstationary causal model} due to the variation in distributions over domains. The data-generating process is illustrated in Figure~\ref{fig:model}.
\begin{figure}[hbtp]
	\label{model}
	\begin{center}
		\includegraphics[width = 0.8\linewidth]{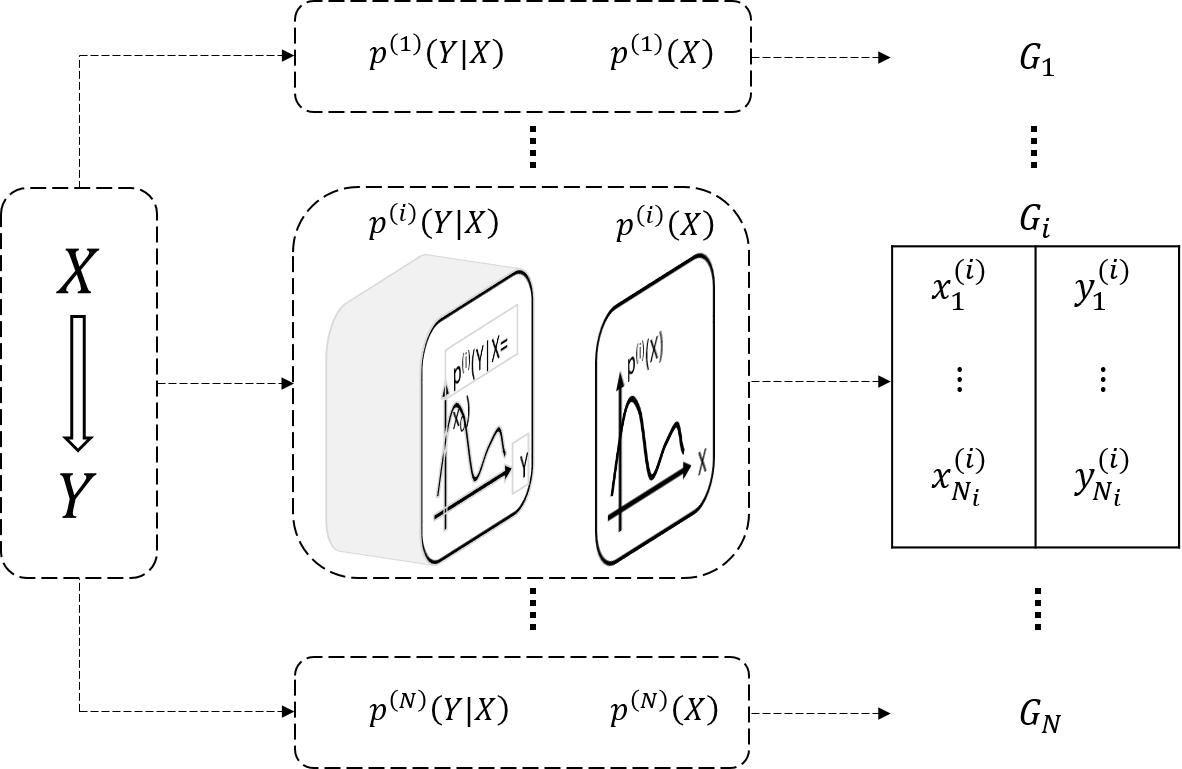}
	\end{center}
	\caption{Data generating process of non-stationary causal model}
	\label{fig:model}
\end{figure}

The collection of data obtained from each domain is called a data group $G_{i}$, and the entire data set is denoted by $\mathbf{G} = \lbrace G_{1}, G_{2}, \dots, G_{N} \rbrace$. This nonstationarity over groups is common in the real world, as the observations we obtained are usually collected over different time periods or from different sources (e.g. different geographical regions or experimental settings).

\subsection{Objective of Non-stationary Causal Model Inference}
Our goal of nonstationary causal model inference is, by exploiting the variation of distributions in different groups, to accurately estimate the causal direction between $X$ and $Y$. We also extend, our approach to learn the full causal structure of two kinds of causal graphs by transforming their relationship among groups into a LiNGAM model. For clarity, we list some of the notations we use in the following sections in Table~\ref{notation}.
\begin{table}[!ht]
	\caption{Notations}
	\label{notation}
	\begin{center}
		\begin{tabular}{ll}
			\toprule
			Symbol & Description \\
			\midrule
			$p^{(i)}(X), p^{(i)}(Y)$ & Density of $X$, $Y$ in group $i$ \\ 
			$\overline{p}(X)$ & Base of the density of $X$ \\ 
			$\Delta p^{(i)}(X)$ & Variation of the density of $X$ in group $i$ \\
			$p^{(i)}(Y|X)$ & Conditional density of $Y$ given $X$ in group $i$ \\
			$\overline{p}(Y|X)$ & Base of the conditional density of $Y$ given $X$ \\
			$\Delta p^{(i)}(Y|X)$ & Variation of the conditional density of $Y$ given $X$ in group $i$ \\
			$\mathcal{X}, \mathcal{Y}$ & domain of variable $X$, $Y$ \\
			$\mu^{(i)}_{\otimes X}$, $\mu^{(i)}_{\otimes Y}$ & Mean embedding of $p^{(i)}(X)$ in $\mathcal{X} \otimes \mathcal{X}$, $p^{(i)}(Y)$ in $\mathcal{Y} \otimes \mathcal{Y}$ \\
			$\overline{\mu}_{\otimes X}$, $\overline{\mu}_{\otimes Y}$ & Mean embedding of $\overline{p}(X)$ in $\mathcal{X} \otimes \mathcal{X}$, $\overline{p}(Y)$ in $\mathcal{Y} \otimes \mathcal{Y}$ \\
			$\Delta \mu^{(i)}_{\otimes X}$, $\Delta \mu^{(i)}_{\otimes Y}$ & Mean embedding of $\Delta p^{(i)}(X)$ in $\mathcal{X} \otimes \mathcal{X}$, $\Delta p^{(i)}(Y)$ in $\mathcal{Y} \otimes \mathcal{Y}$ \\
			\bottomrule
		\end{tabular}
	\end{center}
\end{table}

\section{Hilbert Space Embedding of Distributions}
Kernel embedding-based approaches represent probability distributions by elements in a reproducing kernel Hilbert space (RKHS) and it serves as the main tool in this letter to characterize distributions.

An RKHS $\mathcal{F}$ over $\mathcal{X}$ with a kernel $k$ is a Hilbert space of functions $f:\mathcal{X} \rightarrow \mathbb{R}$. Denoting its inner product by $\langle \cdot, \cdot \rangle_{\mathcal{F}}$, RKHS $\mathcal{F}$ fulfills the reproducing property $\langle f(\cdot), k(x,\cdot) \rangle_{\mathcal{F}} = f(x)$. People often regard $\phi(x) \coloneqq k(x,\cdot)$ as a feature map of $x$. Kernel embedding of a marginal density $p(X)$ \citep{smola2007hilbert} is defined as the expectation of its feature map:
\begin{equation}
\mu_{X} \coloneqq \mathbb{E}_{X}[\phi(X)] = \int _{ \mathcal{X}  }^{  }{ \phi(x)p(x)dx },
\label{knl:1}
\end{equation}
where $\mathbb{E}_{X}[\phi(X)]$ is the expectation of $\phi(X)$. It has been shown that $\mu_{X}$ is guaranteed to be an element in RKHS if $\mathbb{E}_{X}[k(X,X)]<\infty$ is satisfied.  It  is also generalized to joint distribution using tensor product feature spaces. The kernel embedding of a joint density $p(X, Y)$ is defined as 
\begin{align}
\mathcal{C}_{XY} & \coloneqq \mathbb{E}_{XY}[\phi(X) \otimes \phi(Y)] = \int_{\mathcal{X} \times \mathcal{Y}}{ \phi(x) \otimes \phi(y)p(x, y)dxdy}.
\label{knl:2}
\end{align}
Similarly, we have that $\mathcal{C}_{XX} \coloneqq \mathbb{E}_{X}[\phi(X) \otimes \phi(X)]$. The embedding of conditional densities is viewed as an operator that maps from $\mathcal{F}$ to $\mathcal{G}$ which is an RKHS over $\mathcal{Y}$ \citep{song2009hilbert}. Imposing that the conditional embedding satisfies the following two properties:
\begin{align}
& \mu_{Y|x} \coloneqq \mathbb{E}_{Y|x}[\phi(Y)|x] = \mathcal{U}_{Y|X}k(x, \cdot), \label{req:1}\\
& \mathbb{E}_{Y|x}[g(Y)|x]=\langle g, \mu_{Y|x} \rangle_{\mathcal{G}}, \label{req:2}
\end{align}
where $g \in \mathcal{G}$ and $\mu_{Y|x}$ is kernel embedding of marginal density $p(Y|X=x)$, \cite{song2009hilbert} showed that conditional embedding can be defined as $\mathcal{U}_{Y|X} \coloneqq \mathcal{C}_{YX} \mathcal{C}_{XX}^{-1}$ to fulfill equations \ref{req:1} and \ref{req:2}. In the following sections, we follow the definition of kernel mean embedding and embed distributions in a tensor product space to represent distribution of each group.

\section{Embedding-based Nonstationary Causal Model Inference}
In this section we introduce our proposed approach to infer the causal structure of nonstationary causal models.
\subsection{Basic Idea}
Currently, the most widely used idea of inferring causal direction is to quantify the independence between the mechanism generating the cause and the mechanism mapping the cause to the effect. One way to interpret the independence between these two mechanisms is to measure the independence between the cause and the noise. ANM and PNL lie in this field and LiNGAM methods could also be interpreted from this viewpoint \citep{hyvarinen2013pairwise}. We adopt a different interpretation which uses the independence between the marginal distribution of the cause and the conditional distribution of the effect given the cause to capture the independence between these two mechanisms and further exploit causal asymmetry. This kind of independence has also been used in many existing causal inference methods \citep{janzing2009telling, janzing2012information, chen2014causal}. We formalize this independence in postulate 1:
\begin{post}\label{post1}
	The mechanism generating the cause and the mechanism mapping the cause to the effect are two independent natural processes.
\end{post}
\cite{janzing2012information} proposed this postulate and developed information geometry causal inference (IGCI). IGCI uses the density of the cause to characterize the first mechanism and the derivative of the function mapping the cause to the effect to characterize the second. In our approach, the variation of the marginal density of the cause is used to characterize the first mechanism, which is similar to IGCI. What differs from IGCI is that we use the variation of the conditional density to characterize the second mechanism. In subsequent sections, we introduce how we obtain the variation of densities and how we infer the causal direction based on the independence between them. 

\subsection{Decomposition of Distributions}
Given the entire data set $\mathbf{G} = \{G_{1}, G_{2}, \dots, G_{N}\}$, which consists of $N$ groups, we make use of the variation of densities in each group. To obtain the variation, we first compute the mean of marginal densities and conditional densities of all groups as
\begin{align}
\overline{p}(X) = \frac{1}{N}\sum_{i=1}^{N} p^{(i)}(X), \quad \quad \overline{p}(Y|X) = \frac{1}{N}\sum_{i=1}^{N} p^{(i)}(Y|X), \label{1_1} 
\end{align}
where $p^{(i)}(X)$ is the density of $X$ and $p^{(i)}(Y|X)$ is the conditional density of $Y$ given $X$ in group $i$. We call $\overline{p}(X)$ and $\overline{p}(Y|X)$ the base of marginal and conditional densities, respectively. Then the variation of density of each group is given by:

\begin{dfnt}[Variation of density]\label{def1}
	For any $G_{i} \in \mathbf{G}$, we decompose $p^{(i)}(X)$ and $p^{(i)}(Y|X)$ into two parts: one is the base of the (conditional) density and the other is a varying part, i.e.  $p^{(i)}(X) = \overline{p}(X) + \Delta p^{(i)}(X)$ and $p^{(i)}(Y|X) = \overline{p}(Y|X) + \Delta p^{(i)}(Y|X)$. 
	We call $\Delta p^{(i)}(X)$ and $\Delta p^{(i)}(Y|X)$ the variation of the marginal and conditional density of group $i$, respectively.
\end{dfnt}

Since the base of densities is the mean of densities of all groups, $\Delta p^{(i)}(X)$ and $\Delta p^{(i)}(Y|X)$ fulfill the following properties. 
\begin{align}
\frac{1}{N}\sum_{i=1}^{N}\Delta p^{(i)}(X) \equiv 0, \quad \frac{1}{N}\sum_{i=1}^{N}\Delta p^{(i)}(Y|X) \equiv 0. \label{1_2}
\end{align}

Making use of the decomposition of distributions defined in definition \ref{def1}, we are able to analyze densities of each group with some components fixed, which finally guides us to a fixed linear causal model. We take group $i$ as an example to provides some insights before elaborating the derivations. The marginal density of effect $Y$ is given by
\begin{align}
p^{(i)}(Y)=\int{ \left(\overline{p}(x) + \Delta p^{(i)}(x) \right) \left(\overline{p}(Y|x) + \Delta p^{(i)}(Y|x)\right) dx }, \label{1_3}
\end{align}
where $\overline{p}(X)$ and $\overline{p}(Y|X)$ are the same in all groups. Therefore, we would obtain a fixed term $\int{ \overline{p}(x) \overline{p}(Y|x) dx }$ which does not change over $i$ in the expansion of equation \ref{1_3}. Although $\int{ \Delta p^{(i)}(x) \overline{p}(Y|x) dx }$ and $\int{ \overline{p}(x) \Delta p^{(i)}(Y|x) dx }$ vary over groups, they also consist of $\overline{p}(X)$ and $\overline{p}(Y|X)$ which allows us to use the invariant to formulate the relation between them into a fixed causal model. In subsequent sections, we adopt kernel embedding to transform these kinds of invariant into a linear model to infer the causal direction. 

\subsection{Kernel Embedding of Distributions in Tensor Product Space}
We resort to kernel embedding to represent distributions. The marginal distributions of $X$ and $Y$ of each group are embedded in tensor product space $\cal{X}\otimes\cal{X}$ and $\cal{Y}\otimes\cal{Y}$, respectively. For simplicity, we use $\mathcal{H}$ to represent the tensor product space $\cal{X}\otimes\cal{X}$ and $\mathcal{G}$ to represent $\cal{Y}\otimes\cal{Y}$ in subsequent sections. Following the definition of kernel mean embedding, we define the mean embeddings of $X$ and $Y$ of group $i$ in $\mathcal{H}$ and $\mathcal{G}$ as:
\begin{dfnt}[tensor mean embedding]\label{def2}
	\begin{align}
	\mu^{(i)}_{\otimes X} \coloneqq \int{ \phi(x)\otimes\phi(x)p^{(i)}(x)dx }, \quad 
	\mu^{(i)}_{\otimes Y} \coloneqq \int{ \phi(y)\otimes\phi(y)p^{(i)}(y)dy }.	
	\label{1_4}
	\end{align}
	where $\phi(x)$ is the feature map of $x$ and $p^{(i)}(x)$ is the density of $x$ in group $i$. Similar notations go for $y$. 
\end{dfnt}
Definition \ref{def2} is the embedding of marginal densities of each group. Since our analysis is conducted on the base and variation of density of each group, we further define the tensor mean embedding of the base and variation of densities:
\begin{dfnt}[tensor mean embedding of the base and variation of distributions]\label{def3}
	\begin{align}
	\overline{\mu}_{\otimes X} &\coloneqq \int{ \phi(x)\otimes\phi(x)\overline{p}(x)dx }, \\
	\Delta \mu^{(i)}_{\otimes X} &\coloneqq \int{ \phi(x)\otimes\phi(x)\Delta p^{(i)}(x)dx }.
	\label{1_5}
	\end{align}
\end{dfnt}
$\overline{\mu}_{\otimes X}$ is the same in all groups and we have $\mu^{(i)}_{\otimes X} = \overline{\mu}_{\otimes X} + \Delta \mu^{(i)}_{\otimes X}$ from definitions \ref{def2} and \ref{def3}. Similarly, there is $\mu^{(i)}_{\otimes Y} = \overline{\mu}_{\otimes Y} + \Delta \mu^{(i)}_{\otimes Y}$. Definition \ref{def2} and \ref{def3} together state how marginal distributions are embedded in the tensor product space after decomposition. Next, we show how we make use of these tensor mean embeddings to infer the causal direction between $X$ and $Y$. To avoid analyzing probability densities directly, we substitute equation \ref{1_3} into definition \ref{def2} to conduct analysis on their embeddings:
\begin{align}
& ~\mu^{(i)}_{\otimes Y} \nonumber \\
= & \int{ \phi(y)\otimes\phi(y) \left[ \int{ (\overline{p}(y|x) + \Delta p^{(i)}(y|x) ) (\overline{p}(x) + \Delta p^{(i)}(x)) dx} \right] dy } \nonumber \\
= & \int{  \left[ \int{ \phi(y)\otimes\phi(y) \left(\overline{p}(y|x) + \Delta p^{(i)}(y|x) \right)  dy} \right] \left(\overline{p}(x) + \Delta p^{(i)}(x) \right) dx } \nonumber \\
\approx &  \int{  \left[ \int{ \phi(y)\otimes\phi(y) \overline{p}(y|x)  dy} \right] \overline{p}(x) dx } + \int{  \left[  \int{ \phi(y)\otimes\phi(y) \Delta p^{(i)}(y|x)  dy} \right] \overline{p}(x) dx } \nonumber \\
& + \int{  \left[ \int{ \phi(y)\otimes\phi(y) \overline{p}(y|x)  dy} \right] \Delta p^{(i)}(x) dx },
\label{1_6}
\end{align}
where we omit the term $\int{  \left(  \int{ \phi(y)\otimes\phi(y) \Delta p^{(i)}(y|x)  dy} \right) \Delta p^{(i)}(x) dx }$. Since the ranges of variables are usually bounded and distributions usually change smoothly instead of drastically in real-world situations, we consider it reasonable to omit the one with two variation terms. Although there exits sets of densities in which the omitted term of certain group would have magnitude comparable to the sum of the remaining three terms when the distribution shifts drastically, we deem it less likely to occur in real situations. Note that this claim is close in spirit to an assumption in \cite{zhang2015discovery} in which the authors assume the nonstationarity can be written as smooth functions of time or domain index. With this claim, we have the tensor mean embedding of the base of distributions as: 
\begin{align}
& ~\overline{\mu}_{\otimes Y} \nonumber \\
= & \int{ \phi(y)\otimes \phi(y) \overline{p}(y)dy } \nonumber \\
= & \int{ \phi(y)\otimes \phi(y) \left[ \frac{1}{N} \sum_{j=1}^{N}p^{(j)}(y) \right] dy } \nonumber \\
= & \int{ \phi(y)\otimes \phi(y) \left[\int \frac{1}{N} \sum_{j=1}^{N} \left(\overline{p}(y|x) + \Delta p^{(j)}(y|x) \right) \left( \overline{p}(x) + \Delta p^{(j)}(x) \right) dx\right] dy } \nonumber \\
\approx & \int{ \phi(y)\otimes \phi(y) \left[\int \frac{1}{N} \sum_{j=1}^{N} \left(\overline{p}(y|x)\overline{p}(x) + \overline{p}(y|x)\Delta p^{(j)}(x) + \Delta p^{(j)}(y|x)\overline{p}(x) \right) dx\right] dy } \nonumber \\
= &\int{ \phi(y)\otimes \phi(y) \left[ \int \overline{p}(y|x)\overline{p}(x)dx\right] dy },
\end{align}
where the approximately equal mark is again derived by omitting the one with two variation terms and the last equality is directly derived from the property shown in equation \ref{1_2}. Then we have the tensor mean embedding of the variation of distributions as:
\begin{align}
\Delta\mu^{(i)}_{\otimes Y} = \mu^{(i)}_{\otimes Y} - \overline{\mu}_{\otimes Y} \approx & \int{  \left(  \int{ \phi(y)\otimes\phi(y) \Delta p^{(i)}(y|x)  dy} \right) \overline{p}(x) dx } \nonumber \\
&+ \int{  \left( \int{ \phi(y)\otimes\phi(y) \overline{p}(y|x)  dy} \right) \Delta p^{(i)}(x) dx } ,
\label{1_delta}
\end{align}
which shows the relation between the tensor mean embedding of the variation of the effect and cause.
$\int{ \phi(y)\otimes\phi(y) \overline{p}(y|x)  dy}$ and $\int{ \phi(y)\otimes\phi(y) \Delta p^{(i)}(y|x)  dy}$ are matrices of functions of $X$. In addition, they are both symmetric and positive definite so they admit decomposition:
\begin{align}
&\int{ \phi(y)\otimes\phi(y) \overline{p}(y|x)  dy} = V(X)V^{T}(X) = \sum_{j=1}^{N_{H}} {v_{j}(X) v^{T}_{j}(X) }, \\
&\int{ \phi(y)\otimes\phi(y) \Delta p^{(i)}(y|x)  dy} = \Delta U(X) \Delta U^{T}(X) =  \sum_{j=1}^{N_{H}} {\Delta u^{(i)}_{j}(X) {\Delta u^{(i)}}^{T}_{j}(X) },
\label{1_7}
\end{align}
where $V(X)$ and $\Delta U(X)$ are lower triangular matrices, $v_{j}(X)$ and $\Delta u^{(i)}_{j}(X)$ denote the $j$-th column of $V(X)$ and $\Delta U(X)$, respectively; and $N_{H}$ denotes the dimension of $V(X)$. The symbol $\Delta$ indicates the corresponding relation of $\Delta U(X)$ to the variation of densities. By assuming that $v_{j}(X)$ and $\Delta u^{(i)}_{j}(X), j = 1, \dots, N_{H}$ lie in the space of $\phi(X)$, we have $v_{j}(X)  = \mathcal{A}_{j} \phi(X)$ and $\Delta u^{(i)}_{j}(X)  = \Delta \mathcal{B}^{(i)}_{j} \phi(X)$. $\mathcal{A}_{j}$ and $\Delta \mathcal{B}^{(i)}_{j}$ are matrices containing coefficient mapping from $\phi(X)$ to $v_{j}(X)$ and $\Delta u^{(i)}_{j}(X)$, respectively. Then we have
\begin{align}
&\int{ \phi(y)\otimes\phi(y) \overline{p}(y|x)  dy} = \sum_{j=1}^{N_{H}} \mathcal{A}_{j} \phi(X) \otimes  \phi(X) \mathcal{A}^{T}_{j}, \label{uyx} \\
&\int{ \phi(y)\otimes\phi(y) \Delta p^{(i)}(y|x)  dy} = \sum_{j=1}^{N_{H}} \Delta \mathcal{B}^{(i)}_{j} \phi(X) \otimes  \phi(X) {\Delta \mathcal{B}^{(i)}_{j}}^{T}. \label{duyix}
\end{align}
By substituting equation \ref{uyx} and \ref{duyix} into equation \ref{1_delta}, we further obtain
\begin{align}
\Delta \mu^{(i)}_{\otimes Y} & \approx \sum_{j=1}^{N_{H}} \mathcal{A}_{j} \Delta \mu^{(i)}_{\otimes X} \mathcal{A}^{T}_{j} + \sum_{j=1}^{N_{H}} \Delta \mathcal{B}^{(i)}_{j} \overline{\mu}_{\otimes X} {\Delta \mathcal{B}^{(i)}_{j}}^{T},
\label{1_8}
\end{align}
where $\Delta \mu^{(i)}_{\otimes X}$ and $\overline{\mu}_{\otimes X}$ are substituted in according to definition \ref{def2} and \ref{def3}.

\subsection{Inferring Causal Directions}
In this section, we discuss how we infer the causal direction using the kernel embedding of decomposed densities. Note again that we consider the case $X \to Y$ without loss of generality throughout this letter.

We start by taking normalized trace $\tau$ on both sides of equation \ref{1_8},
\begin{align}
\tau \left( \Delta\mu^{(i)}_{\otimes Y} \right) &\approx \tau \left( \sum_{j=1}^{N_{H}} \mathcal{A}_{j} \Delta \mu^{(i)}_{\otimes X} \mathcal{A}^{T}_{j} \right) + \tau \left( \sum_{j=1}^{N_{H}} \Delta \mathcal{B}^{(i)}_{j} \overline{\mu}_{\otimes X} {\Delta \mathcal{B}^{(i)}_{j}}^{T} \right) \nonumber \\
&= \tau \left( \sum_{j=1}^{N_{H}} \mathcal{A}^{T}_{j} \mathcal{A}_{j} \Delta \mu^{(i)}_{\otimes X} \right) + \tau \left( \sum_{j=1}^{N_{H}} {\Delta \mathcal{B}^{(i)}_{j}}^{T} \Delta \mathcal{B}^{(i)}_{j} \overline{\mu}_{\otimes X} \right) \nonumber \\
&= \tau \left( \mathcal{A} \Delta \mu^{(i)}_{\otimes X} \right) + \tau \left( {\Delta \mathcal{B}^{(i)}}  \overline{\mu}_{\otimes X} \right),
\label{1_10}
\end{align}
where $\tau(A) = \tr(A) / l_{A}$ is called the normalized trace of $A$, $l_{A}$ is the size of $A$, $\mathcal{A} = \sum_{j=1}^{N_{H}} \mathcal{A}^{T}_{j} \mathcal{A}_{j}$ and $\Delta \mathcal{B}^{(i)} = \sum_{j=1}^{N_{H}} {\Delta \mathcal{B}^{(i)}_{j}}^{T} \Delta \mathcal{B}^{(i)}_{j}$. Since the independence of the two mechanisms in Postulate \ref{post1} is difficult to quantify, we consider to use the density of the cause and the conditional density of the effect given the cause to represent the two mechanisms and adopt the independence between the base and variation of these two densities to infer the causal direction. The independence we rely on is based on the concept of free independence \citep{voiculescu1992free, voiculescu1997free}.
\begin{dfnt}[Free independence]\label{def4}
	\citep{voiculescu1992free, voiculescu1997free}. Let $\mathcal{D}$ be an algebra and $\psi: \mathcal{D} \rightarrow \mathbb{R}$ a linear functional on $\mathcal{D}$ with $\psi(1) = 1$. Then $A$ and $B$ are called free if
	\begin{align}
	\psi \left( p_{1}(A)q_{1}(B)p_{2}(A)q_{2}(B)\cdots \right) = 0,
	\end{align}
	for polynomials $p_{i}$, $q_{i}$, whenever $p_{i}(A) = q_{i}(B) = 0$.
\end{dfnt}

It is straightforward from definition \ref{def4} that if $A$ and $B$ are free independent, it holds that $\psi(AB) = \psi(A)\psi(B)$ \citep{voiculescu1992free, voiculescu1997free}. Then we have the following two assumptions to characterize the independence in postulate \ref{post1}:
\begin{assp}\label{assump1}
	We assume that the tensor mean embedding of the variation of marginal density of the cause ($\Delta \mu^{(i)}_{\otimes X}, i=1, \dots, N$) and $\mathcal{A}$ is free independent, and the tensor mean embedding of the base of marginal density of the cause ($\overline{\mu}_{\otimes X}$) and $\Delta \mathcal{B}^{(i)}, i=1, \dots, N$, is free independent, that is,
	\begin{align}
	\tau\left( \mathcal{A} \Delta \mu^{(i)}_{\otimes X}\right) = \tau\left( \mathcal{A} \right) \tau\left( \Delta \mu^{(i)}_{\otimes X} \right), i = 1, \dots, N, \label{assp_eq1} \\
	\tau\left( \Delta \mathcal{B}^{(i)}  \overline{\mu}_{\otimes X} \right) = \tau\left( \Delta \mathcal{B}^{(i)} \right) \tau\left(\overline{\mu}_{\otimes X} \right), i = 1, \dots, N, \label{assp_eq2}
	\end{align}
	where $N$ is the number of groups.
\end{assp}
Assumption \ref{assump1} captures the independence between the mechanism generating the cause and the mechanism mapping the cause to the effect. In equation \ref{assp_eq1}, $\mathcal{A}$ depends only on the base of the conditional densities $\overline{p}(Y|X)$ which corresponds to the second mechanism, and $\Delta \mu^{(i)}_{\otimes X}$ depends only on the variation of the marginal densities of the cause $\Delta p^{(i)}(X)$, which corresponds to the first mechanism. Therefore, the free independence between them characterizes the independence in postulate 1. Similarly, we have assumptions shown in equation \ref{assp_eq3}.
\begin{assp}\label{assump2}
	Regarding the normalized trace of the tensor mean embedding of variation of marginal densities of the cause in each group as a realization of a random variable $\tau_{\Delta \mu_{\otimes X}}$ and each $\tau \left(\Delta \mathcal{B}^{(i)} \right)$ as a realization of another random variable $\tau_{\Delta \mathcal{B}}$, we assume that these two random variables are independent, i.e.
	\begin{align}
	\tau_{\Delta \mu_{\otimes X}} \perp\!\!\!\perp \tau_{\Delta \mathcal{B}}.
	\label{assp_eq3}
	\end{align}
\end{assp}
Assumption \ref{assump2} is also motivated by the independence in postulate \ref{post1}. Specifically, $\tau_{\Delta \mu_{\otimes X}}$ captures the information of the variation of marginal densities of the cause, and $\tau_{\Delta \mathcal{B}}$ captures the information of the variation of conditional densities. We interpret postulate \ref{post1} as the independence between the marginal and conditional. Therefore, this independence between their variations of densities (approximately) holds. With assumption \ref{assump1}, equation \ref{1_10} becomes
\begin{align}
\tau\left(\Delta\mu^{(i)}_{\otimes Y}\right) \approx \tau\left( \mathcal{A} \right) \tau\left( \Delta \mu^{(i)}_{\otimes X} \right) + \tau\left( \Delta \mathcal{B}^{(i)} \right) \tau\left(\overline{\mu}_{\otimes X} \right).
\label{1_11}
\end{align}
Since $\overline{p}(x)$ and $\overline{p}(y|x)$ are fixed given $\mathbf{G}$, $\tau\left(\overline{\mu}_{\otimes X}\right)$ and $\tau\left(\mathcal{A}\right)$ are the same in all groups. We introduce the following notations for simplicity:

\noindent
\textbf{Notation 1.} For any $G_{i} \in \mathbf{G}$, we use $\tau_{x}^{(i)}$ and $\tau_{y}^{(i)}$
to represent $\tau\left(\Delta\mu^{(i)}_{\otimes X}\right)$ and $\tau\left(\Delta\mu^{(i)}_{\otimes Y}\right)$, respectively. $\epsilon_{x\rightarrow y}^{(i)}$ denotes $\tau\left(\Delta \mathcal{B}^{(i)}  \overline{\mu}_{\otimes X} \right)$, which is the corresponding noise term. $c_{y|x}$ denotes $\tau\left(\mathcal{A}\right)$. We view each $\tau_{x}^{(i)}$ as a realization of a random variable $\tau_{x}$. Similarly, we have $\tau_{y}$ and $\epsilon_{x\rightarrow y}$.
\begin{prop}\label{prop1}
	If the causal direction is $X\to Y$ and assumptions \ref{assump1} and \ref{assump2} hold, the normalized trace of the tensor mean embeddings of the variation of the densities of the cause $\left(\tau_{x}\right)$ and the effect $\left(\tau_{y}\right)$ fulfill the following linear nongaussian additive model \citep{kano2003causal}:
	\begin{align}
	\tau_{y} &\approx c_{y|x} \tau_{x}+ \epsilon_{x\rightarrow y}.
	\label{lnr}
	\end{align}
\end{prop}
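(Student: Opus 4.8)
The plan is to read off the claimed linear relation directly from the normalized-trace identity already established in equation~\ref{1_10}, and then to upgrade the bare linear relation to a \emph{linear nongaussian additive} one by invoking the two independence assumptions. First I would recall that taking the normalized trace of both sides of \ref{1_8} and using cyclicity of the trace yields \ref{1_10},
\begin{align}
\tau\left(\Delta\mu^{(i)}_{\otimes Y}\right) \approx \tau\left(\mathcal{A}\,\Delta\mu^{(i)}_{\otimes X}\right) + \tau\left(\Delta\mathcal{B}^{(i)}\,\overline{\mu}_{\otimes X}\right),
\end{align}
with $\mathcal{A}=\sum_{j}\mathcal{A}_{j}^{T}\mathcal{A}_{j}$ and $\Delta\mathcal{B}^{(i)}=\sum_{j}{\Delta\mathcal{B}_{j}^{(i)}}^{T}\Delta\mathcal{B}_{j}^{(i)}$ depending, respectively, only on $\overline{p}(Y|X)$ and on $\Delta p^{(i)}(Y|X)$.

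Next I would apply Assumption~\ref{assump1}: the free-independence statement \ref{assp_eq1} replaces $\tau(\mathcal{A}\,\Delta\mu^{(i)}_{\otimes X})$ by $\tau(\mathcal{A})\,\tau(\Delta\mu^{(i)}_{\otimes X})$, and \ref{assp_eq2} replaces $\tau(\Delta\mathcal{B}^{(i)}\,\overline{\mu}_{\otimes X})$ by $\tau(\Delta\mathcal{B}^{(i)})\,\tau(\overline{\mu}_{\otimes X})$, which is exactly \ref{1_11}. Since $\overline{p}(X)$ and $\overline{p}(Y|X)$ are fixed once $\mathbf{G}$ is given, the scalars $c_{y|x}:=\tau(\mathcal{A})$ and $\tau(\overline{\mu}_{\otimes X})$ are the same for all $i$; hence, writing $\tau_{y}^{(i)}=\tau(\Delta\mu^{(i)}_{\otimes Y})$, $\tau_{x}^{(i)}=\tau(\Delta\mu^{(i)}_{\otimes X})$ and $\epsilon_{x\to y}^{(i)}=\tau(\Delta\mathcal{B}^{(i)}\,\overline{\mu}_{\otimes X})=\tau(\Delta\mathcal{B}^{(i)})\,\tau(\overline{\mu}_{\otimes X})$ as in Notation 1, the identity reads $\tau_{y}^{(i)}\approx c_{y|x}\tau_{x}^{(i)}+\epsilon_{x\to y}^{(i)}$ for every $i=1,\dots,N$, i.e.\ \ref{lnr} at the level of realizations, and therefore as a relation between the corresponding random variables $\tau_{x}$, $\tau_{y}$, $\epsilon_{x\to y}$.

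It then remains to justify the two qualifiers ``additive'' (independent noise) and ``nongaussian.'' For additivity I would note that, after \ref{assp_eq2}, $\epsilon_{x\to y}^{(i)}$ is the fixed constant $\tau(\overline{\mu}_{\otimes X})$ times $\tau(\Delta\mathcal{B}^{(i)})$, so the random variable $\epsilon_{x\to y}$ is merely a deterministic rescaling of $\tau_{\Delta\mathcal{B}}$; Assumption~\ref{assump2}, $\tau_{\Delta\mu_{\otimes X}}\perp\!\!\!\perp\tau_{\Delta\mathcal{B}}$, then transfers to $\tau_{x}\perp\!\!\!\perp\epsilon_{x\to y}$, which is the independence required by the additive-noise model of \cite{kano2003causal}. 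For nongaussianity I would argue that the law of $\epsilon_{x\to y}$ is inherited from the way the conditional mechanism $\Delta p^{(i)}(Y|X)$ varies across domains and is thus generically nongaussian; this is the mild genericity condition the proposition alludes to rather than something to be derived, so I would simply state it.

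I expect the main obstacle to be not the algebra --- which is essentially bookkeeping with \ref{1_10}, \ref{1_11} and Notation 1 --- but making the ``$\approx$'' honest. The relation \ref{1_8} already absorbs the error from dropping the doubly-varying term $\int\!\int\phi(y)\otimes\phi(y)\,\Delta p^{(i)}(y|x)\,\Delta p^{(i)}(x)\,dy\,dx$, and Assumptions~\ref{assump1}--\ref{assump2} are themselves only approximate (free independence of finite-dimensional operator estimates, and statistical independence of scalar summaries). I would therefore be explicit that \ref{lnr} holds in the same approximate sense as \ref{1_8}, folding all these discrepancies into the implicit residual, and flag that a quantitative version would require bounding that residual --- which is beyond the scope of this proposition and is precisely why the conclusion is written with ``$\approx$.''
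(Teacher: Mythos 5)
Your derivation of the linear relation itself follows the paper's route exactly: take the normalized trace of \ref{1_8} to get \ref{1_10}, apply the free-independence relations \ref{assp_eq1}--\ref{assp_eq2} of Assumption~\ref{assump1} to obtain \ref{1_11}, and rewrite via Notation~1. Your treatment of the additive-noise independence is also the paper's, and in fact slightly more explicit: since $\epsilon_{x\to y}^{(i)}=\tau\left(\Delta\mathcal{B}^{(i)}\right)\tau\left(\overline{\mu}_{\otimes X}\right)$ with $\tau\left(\overline{\mu}_{\otimes X}\right)$ fixed, $\epsilon_{x\to y}$ is a deterministic rescaling of $\tau_{\Delta\mathcal{B}}$, so Assumption~\ref{assump2} gives $\tau_{x}\perp\!\!\!\perp\epsilon_{x\to y}$.

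The genuine gap is the nongaussianity. You propose to ``simply state'' it as a genericity condition inherited from how $\Delta p^{(i)}(Y|X)$ varies, but nongaussianity is part of the proposition's \emph{conclusion}, not one of its hypotheses --- Assumptions~\ref{assump1} and \ref{assump2} say nothing about the law of the noise, and it is precisely this nongaussianity that later delivers the LiNGAM-type identifiability (the cause--effect asymmetry the whole method rests on), so it cannot be waved through. The paper derives it structurally: from $\Delta\mathcal{B}^{(i)}=\sum_{j=1}^{N_{H}}{\Delta\mathcal{B}^{(i)}_{j}}^{T}\Delta\mathcal{B}^{(i)}_{j}$ one gets
\begin{align}
\tau\left(\Delta\mathcal{B}^{(i)}\right)=\frac{1}{N_{H}}\sum_{j=1}^{N_{H}}\tr\left({\Delta\mathcal{B}^{(i)}_{j}}^{T}\Delta\mathcal{B}^{(i)}_{j}\right)>0,
\end{align}
since each summand is the trace of a Gram-type matrix; hence $\epsilon_{x\to y}^{(i)}$ is sign-definite, its distribution cannot be symmetric, and therefore it cannot be Gaussian. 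Supplying this positivity argument (rather than positing genericity) is what closes the proof; the rest of your write-up, including the honest bookkeeping of where the ``$\approx$'' errors accumulate, is consistent with the paper.
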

\begin{proof}
	By adopting notations in notation 1, equation \ref{1_11} becomes 
	\begin{align}
	\tau_{y}^{(i)} \approx c_{y|x} \tau_{x}^{(i)}+ \epsilon_{x\rightarrow y}^{(i)},\quad i = 1, \dots, N.
	\end{align}
	We first show that $\epsilon_{x\rightarrow y}$ follows nongaussian distributions. According to assumption 1, we have
	\begin{align}
	\epsilon_{x\rightarrow y}^{(i)} = \tau\left( \Delta \mathcal{B}^{(i)} \right) \tau\left(\overline{\mu}_{\otimes X} \right),
	\end{align}
	where $\tau\left(\overline{\mu}_{\otimes X} \right)$ is fixed and thus can be viewed as a constant. From the definition of $\Delta \mathcal{B}^{(i)}$ we have
	\begin{align}
	\tau\left( \Delta \mathcal{B}^{(i)} \right) = \frac{1}{N_{H}} \tr\left( \sum_{j=1}^{N_{H}} {\Delta \mathcal{B}^{(i)}_{j}}^{T} \Delta \mathcal{B}^{(i)}_{j} \right) = \frac{1}{N_{H}} \sum_{j=1}^{N_{H}} \tr\left( {\Delta \mathcal{B}^{(i)}_{j}}^{T} \Delta \mathcal{B}^{(i)}_{j} \right).
	\end{align}
	Since $\tr\left( {\Delta \mathcal{B}^{(i)}_{j}}^{T} \Delta \mathcal{B}^{(i)}_{j} \right)$ are positive for all $j$, we have $\tau\left( \Delta \mathcal{B}^{(i)} \right) > 0$. Therefore, the distribution of $\epsilon_{x\rightarrow y}^{(i)}$ is not symmetric and is thus not Gaussian distributed.
	
	Second, we have $\tau_{x}$ is independent of $\epsilon_{x\rightarrow y}$ according to the independence between $\tau_{\Delta\mu_{\otimes X}}$ and $\tau_{\Delta \mathcal{B}}$ in assumption \ref{assump2}. Then we conclude equation \ref{lnr} forms a linear non-Gaussian additive model.
\end{proof}
According to the identifiability of LiNGAM~\citep{kano2003causal,shimizu-et-al:lingam}, $\tau_{y}$ and $\epsilon_{y\rightarrow x}$ are dependent. By exploiting the cause-effect asymmetry that the cause is independent of the noise only in the causal direction, we propose the following causal inference approach: embedding-based nonstationary causal model inference (ENCI).

\noindent
\textbf{Causal Inference Approach (ENCI)}: Given data set $\mathbf{G}$, we compute $\tau^{(i)}_{x}$ and $\tau^{(i)}_{y}$ for $i = 1, \dots, N$ and conclude that $X\to Y$ if $\tau_{x} \perp\!\!\!\perp \epsilon_{x\rightarrow y}$, otherwise $Y \to X$ if $\tau_{y} \perp\!\!\!\perp \epsilon_{y\rightarrow x}$.

Hilbert Schimidt Independence Criterion (HSIC)~\citep{gretton2007kernel} is applied to measure the independence between the regressor and its corresponding noise on both hypothetical directions, and we favor the direction with less dependence in practice. The ENCI algorithm is given in algorithm~\ref{alg1}.
\begin{algorithm}[ht]
	\caption{ENCI for cause-effect pairs}
	\label{alg1}
	\renewcommand{\algorithmicrequire}{\textbf{Input:}}
	\renewcommand{\algorithmicensure}{\textbf{Output:}}
	\begin{algorithmic}[1]
		\REQUIRE $N$ data groups $\mathbf{G} = \{G_{1}, G_{2}, \dots, G_{N}\}$
		\ENSURE The causal direction
		\STATE Normalize $X$ and $Y$ in each group;
		\STATE Compute $\tau_{x}^{(i)}$ and $\tau_{y}^{(i)}$ for $i=1, \dots, N$;
		\STATE Compute residual $\epsilon_{x\rightarrow y}$ and $\epsilon_{y\rightarrow x}$ by conducting least square regressions;
		\STATE Apply HSIC on $\tau_{x}$ and $\epsilon_{x\rightarrow y}$, denote the quotient of testStat and thresh returned by HSIC by $r_{x\rightarrow y}$. Similarly we have $r_{y\rightarrow x}$.
		\IF{$ r_{x\rightarrow y} < r_{y\rightarrow x}$}
			\STATE The causal direction is $x \rightarrow y$;
		\ELSIF{$ r_{x\rightarrow y} > r_{y\rightarrow x}$}
			\STATE The causal direction is $y \rightarrow x$;
		\ELSE
			\STATE No decision made.
		\ENDIF
	\end{algorithmic}
\end{algorithm}

\subsection{Empirical Estimations}
In this section, we show how to estimate $\tau_{x}^{(i)}$ and $\tau_{y}^{(i)}$ for $i=1, \dots, N$ based on the observations. 

Let $\mathbf{\Phi}^{(i)} = \left[ \phi(x_{1}^{(i)}), \dots, \phi(x_{N_{i}}^{(i)}) \right]$ and $\mathbf{\Gamma}^{(i)} = \left[ \gamma(x_{1}^{(i)}), \dots, \gamma(x_{N_{i}}^{(i)}) \right]$ be the feature matrices of $X$ and $Y$ in group $i$, respectively, given observations in $\mathbf{G}$. We estimate the mean embedding of $p^{(i)}(X)$ in $\mathcal{X}\otimes \mathcal{X}$ as
\begin{align}
\hat{\mu}_{\otimes X}^{(i)} = \frac{1}{N_{i}} \Phi^{(i)} H \left( \Phi^{(i)} H\right)^{T},
\end{align}
where $N_{i}$ is the number of observations in $i$th group, $H = I - \frac{1}{N_{i}} \mathbf{1} \mathbf{1}^{T}$ and $\mathbf{1}$ is a column vector of all 1s. Since we have 
\begin{align}
\hat{\overline{\mu}}_{\otimes X} & = \int{ \phi(x)\otimes \phi(x) \hat{\overline{p}}(x)dx } \nonumber \\
& = \int{ \phi(x)\otimes \phi(x)  \left[ \frac{1}{N} \sum_{j=1}^{N}\hat{p}^{(j)}(x) \right] dx } \nonumber \\
& = \frac{1}{N} \sum_{j=1}^{N}{ \int{ \phi(x)\otimes \phi(x) \hat{p}^{(j)}(x) dx } } \nonumber \\
& = \frac{1}{N} \sum_{j=1}^{N} {\hat{\mu}_{\otimes X}^{(j)}},
\end{align}
for estimating the tensor mean embedding of the base of distributions $\overline{\mu}_{\otimes X}$, the tensor mean embedding of the variation of distributions $\Delta \mu_{\otimes X}^{(i)}$ is estimated as
\begin{align}
\Delta \hat{\mu}_{\otimes X}^{(i)} = \hat{\mu}_{\otimes X}^{(i)} - \hat{\overline{\mu}}_{\otimes X} = \hat{\mu}_{\otimes X}^{(i)} - \frac{1}{N} \sum_{j=1}^{N} {\hat{\mu}_{\otimes X}^{(j)}}.
\label{est_delta}
\end{align}
By taking the normalized trace on both sides of equation \ref{est_delta}, we have
\begin{align}
\tau_{x}^{(i)} & = \tau\left( \hat{\mu}_{\otimes X}^{(i)} \right) - \tau\left( \frac{1}{N} \sum_{i=1}^{N} {\hat{\mu}_{\otimes X}^{(i)}} \right) \nonumber \\
& \approx \tau\left( \frac{1}{N_{i}} \mathbf{\Phi}^{(i)} H \left( \mathbf{\Phi}^{(i)} H\right)^{T} \right) - \frac{1}{N} \sum_{j=1}^{N} \tau\left( \frac{1}{N_{j}} \mathbf{\Phi}^{(j)} H \left( \mathbf{\Phi}^{(j)} H\right)^{T} \right) \nonumber \\
& = \frac{1}{N_{i}^{2}} \tr\left( K^{(i)}_{x} H \right) - \frac{1}{N} \sum_{j=1}^{N} \left[ \frac{1}{N_{j}^{2}} \tr\left( K^{(j)}_{x} H \right) \right],
\end{align}
where $N$ is the total number of groups, $N_{i}$ is the number of observations in $i$th group and $K^{(i)}_{x} = \left(\mathbf{\Phi}^{(i)}\right)^{T}\mathbf{\Phi}^{(i)}$ is the kernel matrix of $X$ in $i$th group. Similarly, we have
\begin{align}
	\tau_{y}^{(i)} = = \frac{1}{N_{i}^{2}} \tr\left( K^{(i)}_{y} H \right) - \frac{1}{N} \sum_{j=1}^{N} \left[ \frac{1}{N_{j}^{2}} \tr\left( K^{(j)}_{y} H \right) \right],
\end{align}
where $K^{(i)}_{y} = \left(\mathbf{\Gamma}^{(i)}\right)^{T}\mathbf{\Gamma}^{(i)}$ is the kernel matrix of $X$ in $i$th group. We can see that both $\tau_{x}^{(i)}$ and $\tau_{y}^{(i)}$ can be easily calculated from Gram matrix using kernel methods.

\section{Extending ENCI to Causal Graph Discovery}
In this section, we extend ENCI to causal discovery for two kinds of directed acyclic graphs (DAGs). One is a tree-structured graph in which each node has at most one parent node. The other is multiple-independent-parent graph in which parent nodes of each node are mutually independent. Examples of these two kinds of DAGs are shown in Figure~\ref{fig:exmp}.
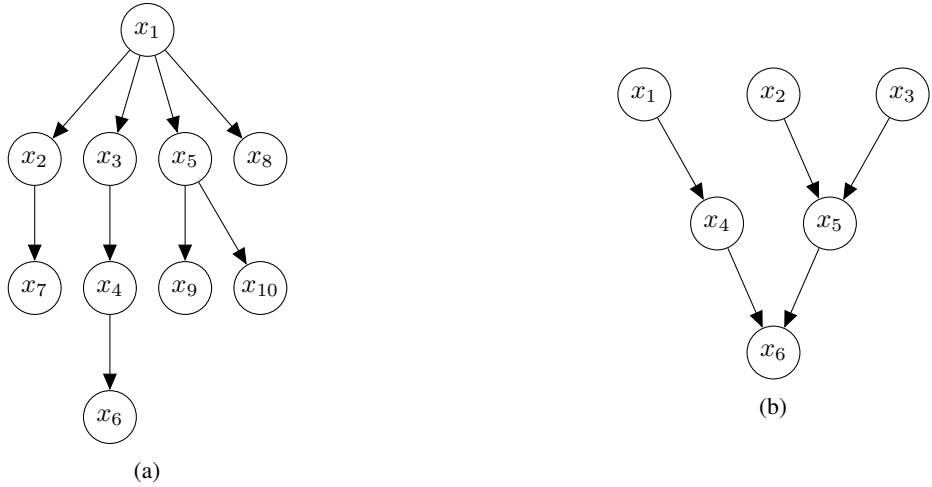
\begin{figure}[!ht]
	\begin{subfigure}{.5\textwidth}
		\centering
		\begin{tikzpicture}[scale=1.0,transform shape,
		state/.style={circle,draw,thick,loop above,inner sep=0,minimum width=10}]
		\node[latent] (x1) {$x_{1}$} ; %
		\node[latent, below=of x1, xshift=-1.5cm] (x2) {$x_{2}$} ; %
		\node[latent, below=of x1, xshift=-0.5cm] (x3) {$x_{3}$} ; %
		\node[latent, below=of x1, xshift=0.5cm] (x5) {$x_{5}$} ; %
		\node[latent, below=of x1, xshift=1.5cm] (x8) {$x_{8}$} ; %
		\edge {x1}{x2, x3, x5, x8} ; %
		
		\node[latent, below=of x2] (x7) {$x_{7}$} ; %
		\node[latent, below=of x3] (x4) {$x_{4}$} ; %
		\node[latent, below=of x5] (x9) {$x_{9}$} ; %
		\node[latent, below=of x8] (x10) {$x_{10}$} ; %
		\edge {x2}{x7} ; %
		\edge {x3}{x4} ; %
		\edge {x5}{x9, x10} ; %
		
		\node[latent, below=of x4] (x6) {$x_{6}$} ; %
		\edge {x4}{x6} ; %
		\end{tikzpicture}
		\caption{}
		\label{fig:exmp_tree}
	\end{subfigure}
	\begin{subfigure}{.5\textwidth}
		\centering
		\begin{tikzpicture}[scale=1.0,transform shape,
		state/.style={circle,draw,thick,loop above,inner sep=0,minimum width=10}]
		\node[latent] (x1) {$x_{1}$} ; %
		\node[latent, right=of x1] (x2) {$x_{2}$} ; %
		\node[latent, right=of x2] (x3) {$x_{3}$} ; %
		\node[latent, below=of x2, xshift=-0.75cm] (x4) {$x_{4}$} ; %
		\node[latent, below=of x2, xshift=0.75cm] (x5) {$x_{5}$} ; %
		\node[latent, below=of x4, xshift=0.75cm] (x6) {$x_{6}$} ; %
		
		\edge{x1}{x4};
		\edge{x2, x3}{x5};
		\edge{x4, x5}{x6};
		\end{tikzpicture} 
		\caption{}
		\label{fig:exmp_mtp}
	\end{subfigure}
	\caption{Examples of (a) Tree-structured graph (b) Multiple-independent-parent graph.}
	\label{fig:exmp}
\end{figure}

\subsection{Describing Causal Relationship by Directed Acyclic Graphs}
Consider a finite set of random variables $\mathbf{X} = (X_{1}, \dots, X_{p})$ with index set $\mathbf{V} \coloneqq \{1, \dots, p\}$. A graph $\mathcal{G}=(\mathbf{V}, \mathbf{E})$ consists of nodes in $\mathbf{V}$ and edges $(m, n)$ in $\mathbf{E}$ for any $m, n \in \mathbf{V}$. Then we introduce graph terminologies required for subsequent sections. Most of the definitions are from \citep{spirtes2000causation}.

Edge $(m, n)$ is a directed link from node $m$ to node $n$. Node $m$ is called a parent of $n$, and $n$ is called a child of $m$ if $(m,n)\in \mathbf{E}$. The parent set of $n$ is denoted by $pa(n)$ and its child set by $ch(n)$. Nodes $m$, $n$ are called adjacent if either $(m, n)\in \mathbf{E}$ or $(n, m)\in \mathbf{E}$. A path in $\mathcal{G}$ is a sequence of distinct vertices $m_{1}, \dots, n_{q}$ such that $m_{k}$ and $n_{k+1}$ are adjacent for all $k=1, \dots, q-1$. If $(m_{k}, m_{k+1}) \in \mathbf{E}$ for all $k$, the path is also called a directed path from $m_{1}$ to $m_{q}$. $\mathcal{G}$ is called a partially directed acyclic graph (PDAG) if there is no directed cycle, i.e., there is no pair $(m, n)$ such that there are directed paths from $m$ to $n$ and from $n$ to $m$. $\mathcal{G}$ is called a directed acyclic graph (DAG) if it is a PDAG and all edges are directed.

General causal graph discovery is very challenging, especially when the relation between a variable pair is a complicated nonlinear stochastic process. In the following section, we show how we discover the causal structure tree-structured graphs (TSG) and multiple-independent-parents graph (MIPG). Note that the causal relation between a variable and its parent node in our model not only could be complicated nonlinear functions but also varies in different groups.
 
\subsection{Tree-Structured Causal Graph Discovery}
In a TSG $\mathcal{G}$ with $p$ nodes, each variable $X_{m}$ and its only parent node $pa(X_m)$ fulfill the linear relation in Equation \ref{lnr}. Thus we have the following proposition for TSG:
\begin{prop}
	In a TSG $\mathcal{G}$ where each variable $X_{m}$ has only one parent node, the normalized traces of the tensor mean embedding of the variation of densities of all variables $\left(\tau_{x_{m}},~ m = 1, \dots, p \right)$ fulfill a linear nongaussian acyclic model (LiNGAM) \citep{shimizu-et-al:lingam} if assumption \ref{assump1} and \ref{assump2} hold:
	\begin{align}
	\bm{\tau}_{x}&\approx \bm{C} \bm{\tau}_{x} + \bm{\epsilon},
	\end{align}
	where $\bm{\tau}_{x} = \left[\tau_{x_{1}}, \dots, \tau_{x_{p}} \right]^{T}$, coefficient matrix $\bm{C}$ whose element on $n$-th row and $m$-th column equals to $c_{x_{n}|x_{m}}$ could be permuted to a lower triangular matrix and $\bm{\epsilon} = \left[\epsilon_{pa(x_{1})\to x_{1}}, \dots, \epsilon_{pa(x_{p})\to x_{p}} \right]^{T}$ collects all noise terms $\epsilon_{pa(x_{m})\to x_{m}},~ m = 1, \dots, p$.
\end{prop}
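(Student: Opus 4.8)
The plan is to reduce this multivariate statement to $p$ independent applications of Proposition~\ref{prop1}, one per edge of the tree, and then to assemble the resulting scalar identities into a single vector equation. Concretely, for every non-root node $X_{m}$ let $pa(x_{m})$ denote its unique parent; applying Proposition~\ref{prop1} to the cause-effect pair $(pa(x_{m}),X_{m})$ — whose relation is of the form of Equation~\ref{lnr} by hypothesis — yields
\begin{align}
\tau_{x_{m}} \approx c_{x_{m}|pa(x_{m})}\,\tau_{pa(x_{m})} + \epsilon_{pa(x_{m})\to x_{m}}, \quad m=1,\dots,p,
\end{align}
and for each root node we adopt the usual LiNGAM convention: the corresponding row of $\bm{C}$ is zero and $\tau_{x_{m}}$ itself plays the role of the exogenous noise $\epsilon_{pa(x_{m})\to x_{m}}$. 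Reading off coefficients, the stacked system is exactly $\bm{\tau}_{x}\approx\bm{C}\bm{\tau}_{x}+\bm{\epsilon}$ with $\bm{C}$ the claimed matrix: its $(n,m)$ entry equals $c_{x_{n}|x_{m}}$ when $m=pa(x_{n})$ and $0$ otherwise.

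Next I would establish that $\bm{C}$ can be permuted to a (strictly) lower triangular matrix. Since each node has at most one parent, every row of $\bm{C}$ contains at most one nonzero entry; since a TSG is a DAG, it admits a topological ordering of $\mathbf{V}$ in which every parent precedes all of its descendants. Relabelling the variables according to this ordering moves the unique nonzero entry of row $n$ (sitting in column $pa(x_{n})$) strictly below the diagonal, so the permuted $\bm{C}$ is strictly lower triangular and $\bm{I}-\bm{C}$ is invertible, i.e. $\bm{\tau}_{x}\approx(\bm{I}-\bm{C})^{-1}\bm{\epsilon}$. This is precisely the acyclicity requirement in the definition of LiNGAM.

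It then remains to verify the two distributional hypotheses on $\bm{\epsilon}$: each component is non-Gaussian, and the components are mutually independent. Non-Gaussianity is inherited verbatim from the proof of Proposition~\ref{prop1}: for each edge, $\epsilon_{pa(x_{m})\to x_{m}}^{(i)}$ is a strictly positive quantity (the normalized trace of a sum of terms $\Delta\mathcal{B}^{(i)\,T}\Delta\mathcal{B}^{(i)}$, which is positive) multiplied by the fixed constant $\tau(\overline{\mu}_{\otimes pa(x_{m})})$, hence has an asymmetric and therefore non-Gaussian distribution; the root-node noises $\tau_{x_{m}}$ are assumed non-Gaussian on the same grounds. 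Mutual independence comes from extending Assumptions~\ref{assump1} and \ref{assump2} to the whole graph: under Postulate~\ref{post1} the mechanism mapping each parent to its child (and the exogenous mechanism driving each root) is an independent natural process, so the edge-indexed random variables playing the role of $\tau_{\Delta\mathcal{B}}$ are jointly independent and each is independent of the cause variations it does not feed into; because $\epsilon_{pa(x_{m})\to x_{m}}$ is such a variable times a constant, the collection $\{\epsilon_{pa(x_{m})\to x_{m}}\}_{m=1}^{p}$ is jointly independent. Combining strict-lower-triangularizability, non-Gaussianity, and mutual independence gives the LiNGAM model.

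The main obstacle is this last independence step. Proposition~\ref{prop1} only provides independence between a single regressor and its own noise, whereas LiNGAM needs the entire noise vector $\bm{\epsilon}$ to be \emph{jointly} independent; this is not a formal consequence of the bivariate assumptions and must be imposed as a graph-level strengthening of Postulate~\ref{post1} — namely, that all the causal mechanisms of the DAG, together with the exogenous variations generating the roots, are mutually independent processes. A secondary point to handle is that each edge contributes its own $\approx$ from the second-order variation term dropped in Equation~\ref{1_6}; stacking $p$ such equations accumulates these residuals, but under the same ``distributions vary smoothly'' rationale invoked earlier the aggregate error remains negligible, so the vector identity holds in the same approximate sense as Equation~\ref{lnr}.
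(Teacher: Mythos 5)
Your proposal is correct and follows essentially the same route as the paper's own proof, which likewise argues in three steps: acyclicity of the TSG yields a causal (topological) order so that $\bm{C}$ can be permuted to lower triangular form, non-Gaussianity of each $\epsilon_{pa(x_{m})\to x_{m}}$ is inherited from Proposition~\ref{prop1}, and Assumption~\ref{assump2} gives the independence of each $\tau_{x_{m}}$ from its own noise term. Your extra remarks --- the explicit root-node convention and the observation that LiNGAM formally requires \emph{mutual} independence of the whole noise vector, which the bivariate Assumptions~\ref{assump1} and \ref{assump2} do not by themselves deliver and which the paper's proof also leaves unaddressed --- make your argument more careful than the published one rather than a different approach.
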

\begin{proof}
	First, $\tau_{x_{m}}$, where $m = 1, \dots, p$, could be arranged in a causal order in which no later variable is the cause of earlier ones due to the acyclicity of the graph. Note that causal order in subsequent sections also means that this condition holds for a sequence of variables. Second, the noise term $\epsilon_{pa(x_{m})\rightarrow x_{m}}$, where $m = 1, \dots, p$, follows nongaussian distributions as shown in proposition \ref{prop1}. Thirdly, assumption \ref{assump2} ensures that $\tau_{x_{m}} \perp\!\!\!\perp \epsilon_{pa(x_{m})\rightarrow x_{m}}$ for $m = 1, \dots, p$.
\end{proof}
Therefore, the graph formed by $\tau_{x_{m}}$, where $m = 1, \dots, p$, fulfills the structure of LiNGAM~\citep{shimizu-et-al:lingam} so we can apply LiNGAM on $\tau_{x}$ to infer the causal structure of the causal graph consists of $X_{1}, \dots, X_{p}$.

\subsection{Multiple-Independent-Parent Graph Discovery}
We extend ENCI to cases where each node could have more than one parent node provided that all its parent nodes are mutually independent.

Suppose a variable $Y$ in graph $\mathcal{G}$ has $q$ independent parent nodes - $X_{1}, \dots, X_{q}$. The marginal density of $Y$ in group $i$ can be obtained from 
\begin{align}
p^{(i)}(Y) = \int{ p^{(i)}(Y|x_{1}, \dots, x_{q})p^{(i)}(x_{1}, \dots, x_{q})dx_{1}\cdots dx_{q} }.
\end{align}
Then by substituting $p^{(i)}(Y)$ into $\mu^{(i)}_{\otimes Y}$ with $p^{(i)}(Y|X_{1}, \dots, X_{q})$ decomposed as $p^{(i)}(Y|X_{1}, \dots, X_{q}) = \overline{p}(Y|X_{1}, \dots, X_{q}) + \Delta p^{(i)}(Y|X_{1}, \dots, X_{q})$ and integrating with respect to $Y$, we have
\begin{align}
\mu^{(i)}_{\otimes Y}=& \int \phi(y)\otimes\phi(y) \left( \int{ \overline{p}(y|x_{1}, \dots, x_{q})p^{(i)}(x_{1}, \dots, x_{q})dx_{1}\cdots dx_{q} } +  \nonumber \right. \\
& \left. \int{\Delta p^{(i)}(y|x_{1}, \dots, x_{q})p^{(i)}(x_{1}, \dots, x_{q})dx_{1}\cdots dx_{q}}\right) dy  \nonumber \\
= & \int{ \left( \int{\phi(y)\otimes\phi(y) \overline{p}(y|x_{1}, \dots, x_{q}) dy} \right) p^{(i)}(x_{1}, \dots, x_{q}) dx_{1}\cdots dx_{q} } + \nonumber \\ 
& \int{ \left( \int{\phi(y)\otimes\phi(y) \Delta p^{(i)}(y|x_{1}, \dots, x_{q}) dy} \right) p^{(i)}(x_{1}, \dots, x_{q}) dx_{1}\cdots dx_{q} }.
\label{mlt_1}
\end{align}
Following the same idea in the previous section, we conduct decomposition on both $ \int{\phi(y)\otimes\phi(y) \overline{p}(y|X_{1}, \dots, X_{q}) dy}$ and $\int{\phi(y)\otimes\phi(y) \Delta p^{(i)}(y|X_{1}, \dots, X_{q}) dy}$ and thus obtain 
\begin{align}
&\int{\phi(y)\otimes\phi(y) \overline{p}(y|X_{1}, \dots, X_{q}) dy} = \sum_{j=1}^{N_{H}} {v_{j}(X_{1}, \dots, X_{q}) v^{T}_{j}(X_{1}, \dots, X_{q}) }, \\
&\int{\phi(y)\otimes\phi(y) \Delta p^{(i)}(y|X_{1}, \dots, X_{q}) dy} = \sum_{j=1}^{N_{H}} {\Delta u^{(i)}_{j}(X_{1}, \dots, X_{q}) \Delta{u^{(i)}}^{T}_{j}(X_{1}, \dots, X_{q}) },
\end{align}
where $v_{j}(X_{1}, \dots, X_{q})$ denotes the $j$-th column of $\int{\phi(y)\otimes\phi(y) \overline{p}(y|X_{1}, \dots, X_{q}) dy}$ and $\Delta u^{(i)}_{j}(X_{1}, \dots, X_{q})$ denotes the $j$-th column of $\int{\phi(y)\otimes\phi(y) \Delta p^{(i)}(y|X_{1}, \dots, X_{q}) dy}$. By assuming that both $v_{j}(X_{1}, \dots, X_{q})$ and $\Delta u^{(i)}_{j}(X_{1}, \dots, X_{q}), j = 1, \dots, N_{H}$ lie in the space of feature map $\phi(X_{1}, \dots, X_{q})$, we have $v_{j}(X_{1}, \dots, X_{q})  = \mathcal{A}_{j} \phi(X_{1}, \dots, X_{q})$ and $\Delta u^{(i)}_{j}(X_{1}, \dots, X_{q})  = \Delta \mathcal{B}^{(i)}_{j} \phi(X_{1}, \dots, X_{q})$. Then they become
\begin{small}
	\begin{align}
	&\int{\phi(y)\otimes\phi(y) \overline{p}(y|X_{1}, \dots, X_{q}) dy} = \sum_{j=1}^{N_{H}} { \mathcal{A}_{j} \phi(X_{1}, \dots, X_{q}) \otimes \phi(X_{1}, \dots, X_{q}) \mathcal{A}^{T}_{j}}, \label{sum_g_1} \\
	&\int{\phi(y)\otimes\phi(y) \Delta p^{(i)}(y|X_{1}, \dots, X_{q}) dy} = \sum_{j=1}^{N_{H}} { \Delta \mathcal{B}^{(i)}_{j} \phi(X_{1}, \dots, X_{q}) \otimes \phi(X_{1}, \dots, X_{q}) {\Delta \mathcal{B}^{(i)}_{j}}^{T} }. \label{sum_g_2}
	\end{align}
\end{small}
By plugging in equations \ref{sum_g_1} and \ref{sum_g_2}, equation \ref{mlt_1} becomes
\begin{align}
& \mu^{(i)}_{\otimes Y} \nonumber \\
= & \int{ \left( \sum_{j=1}^{N_{H}} { \mathcal{A}_{j} \phi(x_{1}, \dots, x_{q}) \otimes \phi(x_{1}, \dots, x_{q}) \mathcal{A}^{T}_{j}} \right)  p^{(i)}(x_{1}, \dots, x_{q})dx_{1}\cdots dx_{q} } \nonumber \\
& + \int{ \left( \sum_{j=1}^{N_{H}} { \Delta \mathcal{B}^{(i)}_{j} \phi(x_{1}, \dots, x_{q}) \otimes \phi(x_{1}, \dots, x_{q}) {\Delta \mathcal{B}^{(i)}_{j}}^{T} } \right)  p^{(i)}(x_{1}, \dots, x_{q})dx_{1}\cdots dx_{q} } \nonumber \\
= & \sum_{j=1}^{N_{H}} \mathcal{A}_{j} \left[ \int{ \phi(x_{1}, \dots, x_{q}) \otimes \phi(x_{1}, \dots, x_{q})  p^{(i)}(x_{1}, \dots, x_{q})dx_{1}\cdots dx_{q} } \right] \mathcal{A}^{T}_{j} \nonumber \\
& + \sum_{j=1}^{N_{H}} \Delta \mathcal{B}^{(i)}_{j} \left[ \int{ \phi(x_{1}, \dots, x_{q}) \otimes \phi(x_{1}, \dots, x_{q})  p^{(i)}(x_{1}, \dots, x_{q}) dx_{1}\cdots dx_{q} } \right] {\Delta \mathcal{B}^{(i)}_{j}}^{T}.
\label{mu_y_eq_3}
\end{align}
Observing that there exists a common term of integration in each term of the summation in equation \ref{mu_y_sumi}, we now analyze this integral term in square brackets. Due to mutual independence among variables $X_{k}$ for $k=1,\dots,q$, $p^{(i)}(X_{1}, \dots, X_{q})$ admits the following factorization:
\begin{align}
p^{(i)}(X_{1}, \dots, X_{q})=p^{(i)}(X_{1}) \cdots p^{(i)}(X_{q}).
\end{align}
Then we adopt Bochner's theorem~\citep{rudin2011fourier} in analyzing $\phi(X_{1}, \dots, X_{q})$. Bochner's theorem states that a continuous shift-invariant kernel $K(x,y)=k(x-y)$ is a positive-definite function if and only if $k(t)$ is the Fourier transform of a nonnegative measure $\rho(\omega)$. Let $\alpha=\int{d\rho(\omega)}$, $p_{\omega}=\rho/\alpha$, and $\omega_{1},\omega_{2}, \dots, \omega_{k}$ be independent samples from $p_{\omega}$. Then the random projection vector $\phi(X)$ can be
\begin{align}
\phi(X) = \frac{\alpha}{\sqrt[]{k}}\left[ e^{-i\omega_{1}^TX}, \dots, e^{-i\omega_{k}^TX} \right].
\end{align}
Similarly, we have 
\begin{align}
\phi(X_{1}, \dots, X_{n})=\frac{\alpha}{\sqrt[]{k}}\left[ e^{-i\left(\omega_{11}^TX_{1}+\cdots + \omega_{1n}^TX_{n}\right)}, \dots, e^{-i\left(\omega_{k1}^TX_{1}+\cdots+ \omega_{kn}^TX_{n}\right)} \right],
\end{align}
which leads to
\begin{align}
\phi(X_{1}, \dots, X_{q})&=\phi(X_{1})\circ\dots\circ\phi(X_{q}),
\end{align}
where $\phi(X_{j})\circ\phi(X_{k})$ denotes the element-wise product. Since
\begin{align}
&\left(\phi(X_{1})\circ\dots\circ\phi(X_{q})\right)\otimes\left(\phi(X_{1})\circ\dots\circ\phi(X_{q})\right) \nonumber \\
= &\left(\phi(X_{1})\otimes\phi(X_{1})\right)\circ\dots\circ\left(\phi(X_{q})\otimes\phi(X_{q})\right),
\end{align} 
the integration in equation \ref{mu_y_eq_3} becomes
\begin{align}
&\int{ \phi(x_{1}, \dots, x_{q}) \otimes \phi(x_{1}, \dots, x_{q})  p^{(i)}(x_{1}, \dots, x_{q}) dx_{1}\cdots dx_{q} } \nonumber \\
= & \int{ \left(\phi(x_{1})\otimes\phi(x_{1})\right)\circ\dots\circ\left(\phi(x_{q})\otimes\phi(x_{q})\right) ~ p^{(i)}(x_{1})\cdots p^{(i)}(x_{q}) ~ dx_{1}\cdots dx_{q}} \nonumber \\
= & \int{ \phi(x_{1})\otimes\phi(x_{1}) \left(p(x_{1}) + \Delta p^{(i)}(x_{1})\right)dx_{1}} \circ \dots \nonumber \\
& \dots \circ \int{ \phi(x_{q})\otimes\phi(x_{q}) \left(p(x_{q}) + \Delta p^{(i)}(x_{q})\right) dx_{q}} \nonumber \\
= & \left(\overline{\mu}_{\otimes X_{1}} + \Delta\mu_{\otimes X_{1}}^{(i)}\right) \circ\dots\circ\left(\overline{\mu}_{\otimes X_{q}} + \Delta\mu_{\otimes X_{q}}^{(i)}\right).
\label{mu_y_sumi}
\end{align}
By substituting equation \ref{mu_y_sumi} into equation \ref{mu_y_eq_3} we have
\begin{align}
\mu^{(i)}_{\otimes Y}&= \sum_{j=1}^{N_{H}} \mathcal{A}_{j} \left[ \left(\overline{\mu}_{\otimes X_{1}} + \Delta\mu_{\otimes X_{1}}^{(i)}\right) \circ\dots\circ\left(\overline{\mu}_{\otimes X_{q}} + \Delta\mu_{\otimes X_{q}}^{(i)}\right) \right] \mathcal{A}^{T}_{j} \nonumber \\
& \quad + \sum_{j=1}^{N_{H}} \Delta \mathcal{B}^{(i)}_{j} \left[ \left(\overline{\mu}_{\otimes X_{1}} + \Delta\mu_{\otimes X_{1}}^{(i)}\right) \circ\dots\circ\left(\overline{\mu}_{\otimes X_{q}} + \Delta\mu_{\otimes X_{q}}^{(i)}\right) \right] {\Delta \mathcal{B}^{(i)}_{j}}^{T} \nonumber \\
&\approx \sum_{j=1}^{N_{H}} \mathcal{A}_{j} \left[\left( \overline{\mu}_{\otimes X_{1}}\circ\dots\circ \overline{\mu}_{\otimes X_{q}} \right) + \left(\Delta \mu_{\otimes X_{1}}^{(i)}\circ\dots\circ \overline{\mu}_{\otimes X_{q}} \right) + \cdots \right. \nonumber \\
& \quad + \left. \left(\overline{\mu}_{\otimes X_{1}}\circ\dots\circ \Delta\mu_{\otimes X_{q}}^{(i)} \right) \right] \mathcal{A}^{T}_{j} + \sum_{j=1}^{N_{H}} \Delta \mathcal{B}^{(i)}_{j} \left[ \overline{\mu}_{\otimes X_{1}}\circ\dots\circ \overline{\mu}_{\otimes X_{q}} \right] {\Delta \mathcal{B}^{(i)}_{j}}^{T},
\end{align}
where we omit terms with more than one tensor mean embedding of variation of densities . Following the same idea in equation \ref{1_delta}, we compute the variation of tensor embedding of $Y$ by
\begin{align}
\Delta \mu^{(i)}_{\otimes Y} &\approx \mu^{(i)}_{\otimes Y} - \overline{\mu}_{\otimes Y} \nonumber \\
&= \sum_{j=1}^{N_{H}} \mathcal{A}_{j} \left[\left(\Delta \mu_{\otimes X_{1}}^{(i)}\circ\dots\circ \overline{\mu}_{\otimes X_{q}} \right) + \cdots + \left(\overline{\mu}_{\otimes X_{1}}\circ\dots\circ \Delta\mu_{\otimes X_{q}}^{(i)} \right) \right] \mathcal{A}^{T}_{j} \nonumber \\
& \quad + \sum_{j=1}^{N_{H}} \Delta \mathcal{B}^{(i)}_{j} \left[ \overline{\mu}_{\otimes X_{1}}\circ\dots\circ \overline{\mu}_{\otimes X_{q}} \right] {\Delta \mathcal{B}^{(i)}_{j}}^{T}.
\label{bf_tau}
\end{align}
Then by taking normalized trace on both sides of equation \ref{bf_tau} we have
\begin{align}
\tau \left( \Delta \mu^{(i)}_{\otimes Y} \right) &\approx \tau \left( \sum_{j=1}^{N_{H}} \mathcal{A}^{T}_{j} \mathcal{A}_{j} \left[\left(\Delta \mu_{\otimes X_{1}}^{(i)}\circ\dots\circ \overline{\mu}_{\otimes X_{q}} \right) + \cdots + \left(\overline{\mu}_{\otimes X_{1}}\circ\dots\circ \Delta\mu_{\otimes X_{q}}^{(i)} \right) \right]  \right) \nonumber \\
& \quad + \tau \left( \sum_{j=1}^{N_{H}} {\Delta \mathcal{B}^{(i)}_{j}}^{T} \Delta \mathcal{B}^{(i)}_{j} \left[ \overline{\mu}_{\otimes X_{1}}\circ\dots\circ \overline{\mu}_{\otimes X_{q}} \right]  \right) \nonumber \\
&= \tau \left( \mathcal{A} \left[\left(\Delta \mu_{\otimes X_{1}}^{(i)}\circ\dots\circ \overline{\mu}_{\otimes X_{q}} \right) + \cdots + \left(\overline{\mu}_{\otimes X_{1}}\circ\dots\circ \Delta\mu_{\otimes X_{q}}^{(i)} \right) \right]  \right) \nonumber \\
& \quad + \tau \left( \Delta \mathcal{B}^{(i)} \left[ \overline{\mu}_{\otimes X_{1}}\circ\dots\circ \overline{\mu}_{\otimes X_{q}} \right]  \right) \nonumber \\
&= \tau\left(\mathcal{A}\right)\tau\left(\Delta \mu_{\otimes X_{1}}^{(i)}\circ\dots\circ \overline{\mu}_{\otimes X_{q}} \right)+\dots +\tau\left(\mathcal{A}\right)\tau\left(\overline{\mu}_{\otimes X_{1}}\circ\dots\circ \Delta\mu_{\otimes X_{q}}^{(i)}\right) \nonumber \\
&\quad +\tau\left(\Delta \mathcal{B}^{(i)}\right)\tau\left(\overline{\mu}_{\otimes X_{1}}\circ\dots\circ \overline{\mu}_{\otimes X_{q}} \right),
\label{tr_2}
\end{align}
where $\mathcal{A} = \sum_{j=1}^{N_{H}} \mathcal{A}^{T}_{j} \mathcal{A}_{j}$ and $\Delta \mathcal{B}^{(i)} = \sum_{j=1}^{N_{H}} {\Delta \mathcal{B}^{(i)}_{j}}^{T} \Delta \mathcal{B}^{(i)}_{j}$. The last equality derives directly from assumption \ref{assump2}. Now we introduce another assumption for further analysis of MIPG.
\begin{lema}\label{lema}
	Two high dimensional square matrices (e.g. $A$, $B$) whose elements are generated independently from two random variables fulfill the following property 
	\begin{align}
		\tau (A \circ B) \approx \tau(A) \tau(B).
		\label{lema1}
	\end{align}
\end{lema}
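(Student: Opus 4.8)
The plan is to reduce Lemma~\ref{lema} to a scalar law of large numbers applied to the diagonal entries, so that — unlike the free-independence arguments used earlier in the paper — nothing beyond elementary probability is needed. The first step is the observation that the normalized trace of a Hadamard product sees only the diagonal: if $A$ and $B$ are $l\times l$ matrices, then
\begin{align}
\tau(A \circ B) = \frac{1}{l}\sum_{k=1}^{l}(A\circ B)_{kk} = \frac{1}{l}\sum_{k=1}^{l} A_{kk}\,B_{kk},
\end{align}
so the off-diagonal structure of $A$ and $B$ plays no role and only the $2l$ diagonal entries enter the identity to be proved.

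By hypothesis the diagonal entries $A_{11},\dots,A_{ll}$ are i.i.d.\ copies of a random variable $a$, the entries $B_{11},\dots,B_{ll}$ are i.i.d.\ copies of a random variable $b$, and the two families are independent of one another. Assuming $a$ and $b$ have finite means, the weak law of large numbers gives $\tfrac{1}{l}\sum_k A_{kk}B_{kk}\to\mathbb{E}[ab]$ and, separately, $\tau(A)=\tfrac{1}{l}\sum_k A_{kk}\to\mathbb{E}[a]$ and $\tau(B)\to\mathbb{E}[b]$ as $l\to\infty$; since $a\perp\!\!\!\perp b$ we have $\mathbb{E}[ab]=\mathbb{E}[a]\,\mathbb{E}[b]$, so both $\tau(A\circ B)$ and $\tau(A)\,\tau(B)$ converge to the same limit and their difference vanishes, which is exactly the asserted approximate equality for ``high dimensional'' (large $l$) $A$ and $B$. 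If one additionally assumes finite second moments, a sharper statement is available: taking expectations and using independence gives $\mathbb{E}[\tau(A\circ B)]=\mathbb{E}[a]\,\mathbb{E}[b]=\mathbb{E}[\tau(A)]\,\mathbb{E}[\tau(B)]$ exactly, while $\mathrm{Var}(\tau(A\circ B))$, $\mathrm{Var}(\tau(A))$ and $\mathrm{Var}(\tau(B))$ are each $O(1/l)$, so Chebyshev's inequality shows the two sides of the claimed identity differ by $O_p(l^{-1/2})$.

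I expect the main obstacle to be one of formalization rather than of mathematical content: the statement is inherently asymptotic, so to turn it into a theorem one must commit to the precise regularity assumptions on $a$ and $b$ (finite first moments for convergence in probability, finite second moments for the $O_p(l^{-1/2})$ rate) and to the precise reading of ``$\approx$''. A secondary remark worth making is that the argument uses only that the \emph{diagonal} entries of $A$ and of $B$ are i.i.d.\ and that the $A$-diagonal is independent of the $B$-diagonal; the hypothesis that all entries are generated this way is stronger than needed but harmless. Everything past the reduction to the diagonal is a textbook application of the law of large numbers.
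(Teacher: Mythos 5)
Your argument is correct and takes essentially the same route as the paper's own proof: both reduce $\tau(A\circ B)$ to the diagonal average $\frac{1}{l}\sum_{k}A_{kk}B_{kk}$, approximate it (and $\tau(A)$, $\tau(B)$) by the expectations $\mathbb{E}[X_A X_B]$, $\mathbb{E}[X_A]$, $\mathbb{E}[X_B]$ of the underlying random variables, and then factor the first expectation using their independence. The only difference is that you make explicit the law-of-large-numbers step, the moment assumptions, and the $O_p(l^{-1/2})$ rate that the paper leaves implicit behind its ``$\approx$'' signs.
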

\begin{proof}
	Firstly, the elements of $A$ and $B$ can be viewed as realizations of two underlying random variables; we denote them by $X_{A}$ and $X_{B}$, respectively. The left hand side of Equation \ref{lema1} becomes
	\begin{align}
		\tau (A \circ B) = \frac{1}{l_{A}} \tr(A \circ B) = \frac{1}{l_{A}} \sum_{j=1}^{l_{A}}A_{jj}B_{jj} \approx \mathbb{E}\left[ X_{A}X_{B} \right],
	\end{align}
	where $l_{A}$ is the size of $A$ and $A_{jj}$ denotes $A$'s element on $j$th row and $j$th column. Similarly, we have $B_{jj}$. Then the right hand side of equation \ref{lema1} becomes
	\begin{align}
		\tau(A) \tau(B) = \left( \frac{1}{l_{A}} \sum_{j=1}^{l_{A}}A_{jj} \right) \left( \frac{1}{l_{B}} \sum_{j=1}^{l_{B}}B_{jj} \right) \approx \mathbb{E}\left[ X_{A} \right] \mathbb{E}\left[ X_{B} \right],
	\end{align}
	where $l_{A}$ is the size of $A$ and $l_{A} = l_{B}$. Finally, by adopting the independence between $X_{A}$ and $X_{B}$, we complete the proof.
\end{proof}
Based on Lemma \ref{lema}, we make following assumption for MIPG.
\begin{assp}\label{assump3}
	We assume that the elements of tensor mean embedding of the variation of density of each parent node (e.g. $\Delta \mu_{\otimes X_{k}}^{(i)}$k = 1, \dots, q) of certain variable and that of the base of densities of other parent nodes (e.g. $\overline{\mu}_{\otimes X_{l}}, l \neq k$) are generated independently.
\end{assp}
A basic example implied by assumption \ref{assump3} is $\tau (\Delta \mu_{\otimes X_{k}}^{(i)} \circ \overline{\mu}_{\otimes X_{l}}) = \tau(\Delta \mu_{\otimes X_{k}}^{(i)}) \tau(\overline{\mu}_{\otimes X_{l}})$. $\Delta \mu_{\otimes X_{k}}^{(i)}$ depends only on $\Delta p^{(i)}(X_{k})$ and $\overline{\mu}_{\otimes X_{l}}$ depends only on $\overline{p}(X_{l})$. Based on the mutual independence among parent nodes of variables in MIPG, assumption \ref{assump3} further states that the tensor mean embedding of the variation of the density of a parent node is independent of that of the base of the density of another parent node. This can be easily extended to cases with more than two terms provided that the independence holds. Under assumption \ref{assump3}, equation \ref{tr_2} becomes
\begin{align}
\tau\left(\Delta\mu^{(i)}_{\otimes Y}\right)\approx &~\tau\left(\mathcal{A}\right)\tau\left(\Delta \mu_{\otimes X_{1}}^{(i)}\right)\tau\left(\mu_{\otimes X_{2}}\circ\dots\circ \mu_{\otimes X_{q}} \right)+\dots \nonumber \\
&\dots +\tau\left(\mathcal{A}\right)\tau\left(\mu_{\otimes X_{1}}\circ\dots\circ \mu_{\otimes X_{q-1}}\right)\tau\left(\Delta\mu_{\otimes X_{q}}^{(i)} \right) \nonumber \\
&+\tau\left(\Delta \mathcal{B}^{(i)}\right)\tau\left(\mu_{\otimes X_{1}}\circ\dots\circ \mu_{\otimes X_{q}} \right).
\label{mtp_1}
\end{align}
We introduce the following notations for simplicity:

\noindent
\textbf{Notation 2.} We denote the $k$th parent node of $Y$ by $pa_{k}(Y)$, $\tau\left(\Delta \mu_{\otimes Y}^{(i)}\right)$ by $\tau_{ y }^{(i)}$, $\tau\left(\Delta \mu_{\otimes X_{k}}^{(i)}\right)$ by $\tau_{ pa_k(y) }^{(i)}$, $\tau\left(\mathcal{A}\right)\tau\left(\mu_{\otimes X_{1}}\circ\dots\mu_{\otimes X_{k-1}} \circ \mu_{\otimes X_{k+1}} \circ \dots\circ \mu_{\otimes X_{q}} \right)$ by $c_{y|pa_{k}(y)}$ and $\tau\left(\Delta \mathcal{B}^{(i)}\right)\tau\left(\mu_{\otimes X_{1}}\circ\dots\circ \mu_{\otimes X_{q}} \right)$ by $\epsilon_{pa(y)\rightarrow y}^{(i)}$. We view each $\tau_{ y }^{(i)}$ as a realization of a random variable $\tau_{ y }$. Similarly, there are variables $\tau_{ pa_k(y) }, k=1, \dots, q$ and $\epsilon_{pa(y)\rightarrow y}$.

Then equation \ref{mtp_1} is formalized in the following proposition:
\begin{prop}\label{prop3}
	In an MIPG $\mathcal{G}$ of $p$ nodes where each variable $X_{m}$ has $q_m$ independent parent nodes,  if assumption \ref{assump1} to \ref{assump3} hold, the normalized traces of the tensor mean embedding of the variation of densities of all variables $\left( \tau_{x_{m}}, m = 1, \dots, p \right)$ fulfill a linear nongaussian acyclic model (LiNGAM) \citep{shimizu-et-al:lingam},
	\begin{align}
	\bm{\tau}_{x}&\approx \bm{C} \bm{\tau}_{x} + \bm{\epsilon},
	\end{align}
	where $\bm{\tau}_{x} = \left[\tau_{x_{1}}, \dots, \tau_{x_{p}} \right]^{T}$, coefficient matrix $\bm{C}$ whose element on $n$th row and $m$th column equals to $c_{x_{n}|x_{m}}$ could be permuted to a lower triangular matrix and $\bm{\epsilon} = \left[\epsilon_{pa(X_{1})\rightarrow X_{1}}, \dots, \epsilon_{pa(X_{p})\rightarrow X_{p}} \right]^{T}$.
\end{prop}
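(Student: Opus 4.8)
The plan is to mirror the three-part argument used for the tree-structured case, adding the extra bookkeeping needed to handle several parents at once through Lemma~\ref{lema} and Assumption~\ref{assump3}. I would start from equation~\ref{mtp_1} together with Notation~2: for a single node $Y$ with independent parents $X_1,\dots,X_q$ these already read $\tau_y^{(i)} \approx \sum_{k=1}^{q} c_{y|pa_k(y)}\,\tau_{pa_k(y)}^{(i)} + \epsilon_{pa(y)\to y}^{(i)}$. Writing this relation for every node $X_m$ of $\mathcal{G}$ and stacking the $p$ scalar equations yields $\bm{\tau}_x \approx \bm{C}\bm{\tau}_x + \bm{\epsilon}$, where the $(n,m)$ entry of $\bm{C}$ is $c_{x_n|x_m}$ when $x_m \in pa(x_n)$ and $0$ otherwise. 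A point worth spelling out is that, thanks to the approximation in equations~\ref{bf_tau}--\ref{mtp_1} which discards every product of two or more variation embeddings, each $c_{x_n|x_m}$ is built only from $\tau(\mathcal{A})$ and the normalized traces of the \emph{base} embeddings $\overline{\mu}_{\otimes X_l}$ of the remaining parents of $x_n$; since those are invariant across groups, $\bm{C}$ is genuinely a fixed coefficient matrix and only the regressors $\tau_{pa_k(x_n)}^{(i)}$ and the residuals carry the group index.

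Next I would verify the three defining conditions of a LiNGAM for $\bm{\tau}_x \approx \bm{C}\bm{\tau}_x + \bm{\epsilon}$. For acyclicity, the DAG $\mathcal{G}$ supplies a causal order of its nodes in which $c_{x_n|x_m}$ can be nonzero only when $x_m$ precedes $x_n$, so relabelling the variables in that order permutes $\bm{C}$ to a strictly lower triangular matrix. For non-Gaussianity of the noise, Notation~2 gives $\epsilon_{pa(x_m)\to x_m}^{(i)} = \tau(\Delta\mathcal{B}^{(i)})\,\tau\big(\overline{\mu}_{\otimes X_1}\circ\cdots\circ\overline{\mu}_{\otimes X_{q_m}}\big)$; the second factor is a fixed positive number because each $\overline{\mu}_{\otimes X_l}$ is positive semidefinite and Hadamard products preserve positive semidefiniteness (Schur product theorem), while the first factor equals $\tfrac{1}{N_H}\sum_{j=1}^{N_H}\tr\big({\Delta\mathcal{B}_j^{(i)}}^{T}\Delta\mathcal{B}_j^{(i)}\big) > 0$ exactly as in the proof of Proposition~\ref{prop1}, so $\epsilon_{pa(x_m)\to x_m}$ is supported on one side of the origin, hence has an asymmetric and therefore non-Gaussian distribution. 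For independence, Assumption~\ref{assump2} read node by node yields $\tau_{x_m} \perp\!\!\!\perp \epsilon_{pa(x_m)\to x_m}$, and Assumption~\ref{assump3} (and the underlying Lemma~\ref{lema}) ensures that the variation traces entering different regressors and residuals are generated independently, so the residuals of distinct families are mutually independent. With these three facts the model meets the hypotheses of LiNGAM~\citep{shimizu-et-al:lingam}, which is the assertion of the proposition.

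The step I expect to be the genuine obstacle is the last one: upgrading the \emph{pairwise} or \emph{single-family} independence statements of Assumptions~\ref{assump1}--\ref{assump3} to the \emph{joint} independence of the whole residual vector $\bm{\epsilon}$ and of $\bm{\epsilon}$ from the non-descendant coordinates of $\bm{\tau}_x$, which is what LiNGAM identifiability actually requires. I would handle this by first invoking the defining property of an MIPG — that the parents of any node are mutually independent — to decouple the variation traces that feed the Hadamard factorization in equations~\ref{tr_2}--\ref{mtp_1}, and then applying Assumptions~\ref{assump2}--\ref{assump3} one family at a time, exactly as the tree-structured proof does implicitly in the one-parent case. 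A fully rigorous treatment would also have to concede that all of these independences, like the linear model itself, hold only up to the order of the $\approx$ approximation used throughout (the dropping of second- and higher-order variation terms), so the conclusion is that $\bm{\tau}_x \approx \bm{C}\bm{\tau}_x + \bm{\epsilon}$ is a LiNGAM to that same order, after which standard LiNGAM estimation recovers the causal structure over $X_1,\dots,X_p$.
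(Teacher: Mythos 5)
Your proposal is correct and follows essentially the same route as the paper: stack the per-node relation obtained in equation \ref{mtp_1}, use acyclicity of $\mathcal{G}$ to get a causal order (so $\bm{C}$ permutes to lower triangular), reuse the positivity argument of Proposition \ref{prop1} for non-Gaussianity of the noise, and invoke Assumptions \ref{assump2}--\ref{assump3} for the independence of regressors and residuals. Your additional remarks (the coefficients being fixed across groups, the Schur-product positivity of the base-embedding factor, and the pairwise-versus-joint independence caveat) merely make explicit points the paper's own three-line proof leaves implicit.
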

\begin{proof}
	First, $\tau_{x_{m}}$, where $m = 1, \dots, p$, could be arranged in an \emph{causal order} due to the acyclicity of the graph. Second, the noise term $\epsilon_{pa(x_{m})\rightarrow x_{m}}$, where $m = 1, \dots, p$, follows nongaussian distributions as shown in proposition \ref{prop1}. Thirdly, assumption \ref{assump2} ensures that $\tau_{x_{m}} \perp\!\!\!\perp \epsilon_{pa(x_{m})\rightarrow x_{m}}$ for $m = 1, \dots, p$.
\end{proof}

\begin{algorithm}[!ht]
	\caption{ENCI for causal graphs}
	\label{alg2}
	\renewcommand{\algorithmicrequire}{\textbf{Input:}}
	\renewcommand{\algorithmicensure}{\textbf{Output:}}
	\begin{algorithmic}[1]
		\REQUIRE $N$ data groups $\mathbf{G} = \{G_{1}, G_{2}, \dots, G_{N}\}$
		\ENSURE The estimated coefficient matrix $C_{ENCI}$ of the causal graph
		\STATE Normalize $X_{m}$ in each group for $m = 1, \dots, p$;
		\STATE Compute $\tau_{x_{1}}^{(i)}, \dots, \tau_{x_{p}}^{(i)}$ for $i=1, \dots, N$;
		\STATE Apply LiNGAM on $\tau_{x_{1}}, \dots, \tau_{x_{p}}$ and obtain the coefficient matrix $\mathbf{C}$;
		\STATE Denote the number of rows and columns with only one non-zero element by $n_{row}$ and $n_{col}$, respectively;
		\IF{$n_{row} > n_{col}$}
		\STATE Set elements in the rows with more than one non-zero element to be zero except for the maximal element and return the resulting matrix $C_{ENCI}$.
		\ELSIF{$n_{row} < n_{col}$}
		\STATE Set elements in the columns with more than one non-zero element to be zero except for the maximal element and return the resulting matrix $C_{ENCI}$.
		\ELSE
		\STATE Return $C$ as $C_{ENCI}$.
		\ENDIF
	\end{algorithmic}
\end{algorithm}
According to proposition \ref{prop3}, we can apply LiNGAM on the normalized traces of the tensor mean embedding of the variation of densities of all variables to infer the causal structure. However, the coefficient matrix $\mathbf{C}$ returned by LiNGAM needs to be further adjusted since LiNGAM is not restricted to the two kinds of causal graphs we are considering in this letter. Obviously for TSGs, each row of $\mathbf{C}$ contains at most one non-zero element. For MIPGs, each column contains at most one non-zero element since it can be obtained by reversing all directed edges of TSGs. Therefore, we first determine whether the returned coefficient matrix is more likely to be a TSG or MIPG by simply comparing the number of rows and columns with one non-zero element. Then we adjust those rows (columns) that violate the corresponding graph structure. The algorithm of extending ENCI to discover the causal structure of a graph with multiple variables are given in algorithm~\ref{alg2}.

\section{Experiment}
We conduct experiments on both synthetic and real data to verify the effectiveness of our proposed causal discovery algorithm. Unless specified, we adopt gaussian kernel with median $\left( d_{M} \right)$ as its kernel width across all subsections. The implementations of ENCI for cause-effect pairs\footnote{\url{https://github.com/amber0309/ENCI_cause-effect-pair}} and causal graphs\footnote{\url{https://github.com/amber0309/ENCI_causal-graph}} are available online. 

\subsection{Synthetic Cause-effect Pairs}
We generate the cause $X$ from the following family of distributions
\begin{align}
X\sim\frac { c_{1} }{ \sqrt { 2\pi { \left( 0.3 \right)  }^{ 2 } }  } { e }^{ -\frac { { \left( X-1 \right)  }^{ 2 } }{ 2{ \left( 0.3 \right)  }^{ 2 } }  } + \frac { c_{2} }{ \sqrt { 2\pi { \left( 0.3 \right)  }^{ 2 } }  } { e }^{ -\frac { { \left( X \right)  }^{ 2 } }{ 2{ \left( 0.3 \right)  }^{ 2 } }  }
+ \frac { c_{3} }{ \sqrt { 2\pi { \left( 0.3 \right)  }^{ 2 } }  } { e }^{ -\frac { { \left( X+1 \right)  }^{ 2 } }{ 2{ \left( 0.3 \right)  }^{ 2 } }  }, \nonumber
\end{align}
where $c_{1}$, $c_{2}$ and $c_{3}$ are randomly sampled from a uniformly distributed simplex. When generating a group of data, $c_{1}$ to $c_{3}$ are firstly sampled to determine the distribution of $X$. Then $40\sim50$ data points are sampled from the corresponding distribution to form a group, and 200 groups are generated in each experiment. The generating mechanism of $c_{1}$ to $c_{3}$ leads to the independence and difference of distributions in different groups. We conduct experiments with both an additive mechanism, $Y = f(X) + E$, and a multiplicative mechanism, $Y = f(X) \times E$. $E$ is the standard Gaussian noise. The function mapping $X$ to $Y$ of each group is randomly chosen from $f_{1}$ to $f_{7}$,
\begin{table}[H]
	\centering
	\begin{tabular}{llll}
		$f_{1}(x)=\frac{1}{x^{2} + 1}$ & $f_{2}(x)=sign(cx) \times (cx)^{2}$ & $f_{3}(x)=\cos(cxn)$ & $f_{4}(x)=x^{2}$ \\
		$f_{5}(x)=\sin(cx)$ &  $f_{6}(x)=2\sin(x) + 2\cos(x)$ & $f_{7}(x)=4\sqrt{|x|}$ & \\
	\end{tabular}
\end{table}
\noindent
where $c$ is a random coefficient independently and uniformly sampled from interval $[0.8, 1.2]$. Overall, $p(X)$ and function $f$ are fixed within each group, whereas they vary in different groups. 

We compare ENCI with ANM, PNL, IGCI and ECBP. These existing methods are applied in two different causal inference schemes: $(1)$ on the entire dataset, which is obtained by combing all groups (ALL) and $(2)$ on each group and choose their majority estimation to be their final causal direction estimation (MV). The experimental results of each setting are shown in Table~\ref{synpair}. Note that the accuracies of ECBP are from 50 independent experiments due to its high time complexity and that of other methods are from 100 independent experiments. 

	\begin{table*}[!ht]
		\centering
		\caption{Accuracy of synthetic cause-effect pairs}
		\label{synpair}
		\begin{tabular}{c|ccccccccc}
			\toprule
			\multirow{2}{*}{Mechanism} & \multirow{2}{*}{ENCI} & \multicolumn{2}{c}{ANM} & \multicolumn{2}{c}{PNL} & \multicolumn{2}{c}{IGCI} & \multicolumn{2}{c}{ECBP}\\
			& & MV & ALL & MV & ALL & MV & ALL & MV & ALL\\
			\midrule
			Additive & \textbf{100} & \textbf{100} & 63 & 99 & 50 & \textbf{100} & 66 & \textbf{100} & \textbf{100} \\
			Multiplicative & \textbf{100} & 0 & 26 & 4 & 5 & \textbf{100} & 90 & \textbf{100} & 88\\
			\bottomrule
		\end{tabular}
	\end{table*}

From the experimental results, we can see that ENCI, IGCI-MV, and ECBP-MV performs best compared with other cases. ANM and PNL could not make correct decision in both mechanisms at the same time, and the accuracy of IGCI-ALL is much lower than IGCI-MV, which is probably because of the influence of nonstationarity. ECBP takes non-stationarity into consideration so it achieves satisfactory accuracy in ECBP-ALL. However, we observe that its performance on multiplicative mechanism is not as good as ENCI in our experimental setting.

\subsection{Synthetic Causal Graph}
In this section, we show our experimental results of both kinds of causal graphs.

In the case of tree-structured graph, we conduct experiments on randomly generated graphs with 10 and 50 variables, respectively. First, the distributions of the root node is determined in the same way as the cause $X$ in the previous section. Then each effect is determined by a multiplicative mechanism from its parent node. The function $f$ is randomly chosen from $f_{1}$ to $f_{7}$, and all noise terms follow uniform distribution $\mathcal{U}(0, 1)$. Each time, 1000 groups of data are generated in total. Note that samples within each group are generated from a fixed causal model, but the distribution of the nodes and the mappings between them can vary in different groups. 

We compare ENCI with seven existing methods. ECBP \citep{zhang2015discovery}, ICA-LiNGAM \citep{shimizu-et-al:lingam}, DirectLiNGAM \citep{shimizu2011directlingam} and pairwiseLiNGAM \citep{hyvarinen2013pairwise} are directly applied after combining all groups of data. ANM \citep{hoyer2009nonlinear}, PNL \citep{zhang2009identifiability} and IGCI \citep{janzing2012information} are applied on each pair of adjacent nodes so we only have the proportion of correctly estimated edges (recall) for these three methods. Figure~3 shows one of the estimated results of the methods which are able to recover the causal structure. In each experiment, we compute the recall (and precision) of edge from the estimation results. The mean precision (prc) and recall (rcl) are given in column TSG of Table~\ref{synnet}\footnote{Note that the precision and recall of ECBP are computed from the skeleton instead of the directed graph.}. The results of ECBP on TSG with 10 and 50 variables are the mean of 50 and 20 independent experiments, respectively, due to its high time complexity. The results of other methods are the mean of 100 independent runs.

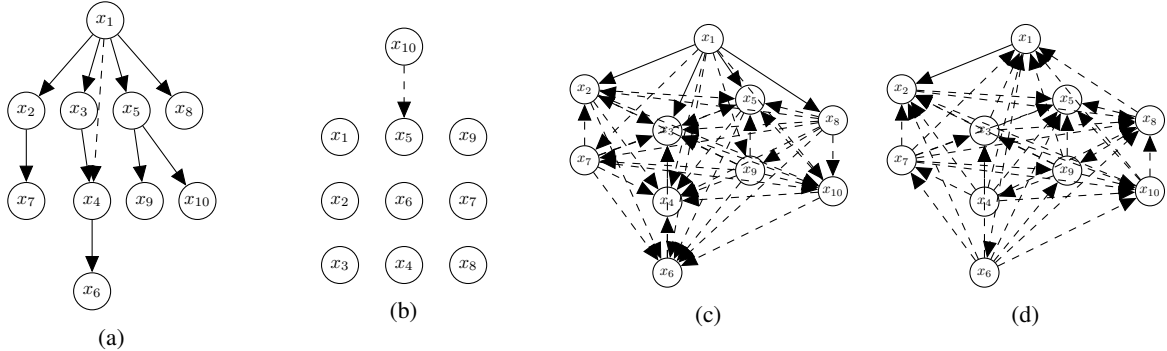
\begin{figure*}[!ht]
	\begin{subfigure}{.23\textwidth}
		\centering
		\begin{tikzpicture}[scale=0.7,transform shape,
		state/.style={circle,draw,thick,loop above,inner sep=0,minimum width=10}]
		\node[latent] (x1) {$x_{1}$} ; %
		\node[latent, below=of x1, xshift=-1.5cm] (x2) {$x_{2}$} ; %
		\node[latent, below=of x1, xshift=-0.5cm] (x3) {$x_{3}$} ; %
		\node[latent, below=of x1, xshift=0.5cm] (x5) {$x_{5}$} ; %
		\node[latent, below=of x1, xshift=1.5cm] (x8) {$x_{8}$} ; %
		\edge {x1}{x2, x3, x5, x8} ; %
		
		\node[latent, below=of x2] (x7) {$x_{7}$} ; %
		\node[latent, below=of x3, xshift=0.25cm] (x4) {$x_{4}$} ; %
		\node[latent, below=of x5, xshift=0.25cm] (x9) {$x_{9}$} ; %
		\node[latent, below=of x8, xshift=0.25cm] (x10) {$x_{10}$} ; %
		\edge {x2}{x7} ; %
		\edge {x3}{x4} ; %
		\edge {x5}{x9, x10} ; %
		\edge[style=dashed] {x1}{x4};
		
		\node[latent, below=of x4] (x6) {$x_{6}$} ; %
		\edge {x4}{x6} ; %
		\end{tikzpicture}
		\caption{}
		\label{Fig:syn_enci}
	\end{subfigure}
	\begin{subfigure}{.23\textwidth}
		\centering
		\begin{tikzpicture}[scale=0.7,transform shape,
		state/.style={circle,draw,thick,loop above,inner sep=0,minimum width=10}]
		\node[latent] (x1) {$x_{1}$} ; %
		\node[latent, below=of x1, yshift=0.5cm] (x2) {$x_{2}$} ; %
		\node[latent, below=of x2, yshift=0.5cm] (x3) {$x_{3}$} ; %
		\node[latent, right=of x3, xshift=-0.5cm] (x4) {$x_{4}$} ; %
		\node[latent, right=of x4, xshift=-0.5cm] (x8) {$x_{8}$} ; %
		\node[latent, right=of x2, xshift=-0.5cm] (x6) {$x_{6}$} ; %
		\node[latent, right=of x6, xshift=-0.5cm] (x7) {$x_{7}$} ; %
		\node[latent, right=of x1, xshift=-0.5cm] (x5) {$x_{5}$} ; %
		\node[latent, right=of x5, xshift=-0.5cm] (x9) {$x_{9}$} ; %
		\node[latent, above=of x5] (x10) {$x_{10}$} ; %
		\edge[style=dashed] {x10}{x5} ; %
		\end{tikzpicture}
		\caption{}
		\label{Fig:syn_ling}
	\end{subfigure}
	\begin{subfigure}{.25\textwidth}
		\centering
		\begin{tikzpicture}[scale=0.55,transform shape,
		state/.style={circle,draw,thick,loop above,inner sep=0,minimum width=10}]
		\node[latent] (x1) {$x_{1}$} ; %
		\node[latent, below=of x1, xshift=-3cm, yshift=0.5cm] (x2) {$x_{2}$} ; %
		\node[latent, below=of x1, xshift=-1cm, yshift=-0.5cm] (x3) {$x_{3}$} ; %
		\node[latent, below=of x1, xshift=1cm, yshift=0.25cm] (x5) {$x_{5}$} ; %
		\node[latent, below=of x1, xshift=3cm, yshift=-0.25cm] (x8) {$x_{8}$} ; %
		
		\node[latent, below=of x2] (x7) {$x_{7}$} ; %
		\node[latent, below=of x3] (x4) {$x_{4}$} ; %
		\node[latent, below=of x5] (x9) {$x_{9}$} ; %
		\node[latent, below=of x8] (x10) {$x_{10}$} ; %
		
		\node[latent, below=of x4] (x6) {$x_{6}$} ; %
		
		\edge{x1}{x2, x3, x5, x8};
		\edge[style=dashed]{x1}{x4, x6, x7, x9, x10};
		
		\edge[style=dashed]{x2}{x6, x4, x10, x5, x3};
		
		\edge[style=dashed]{x4}{x3};
		
		\edge[style=dashed]{x5}{x6, x4, x3};
		
		\edge[style=dashed]{x6}{x4, x3};
		
		\edge[style=dashed]{x7}{x4, x3, x6, x5, x10, x2};
		
		\edge[style=dashed]{x8}{x2, x3, x4, x5, x6, x7, x9, x10};
		
		\edge[style=dashed]{x9}{x2, x3, x4, x5, x6, x7, x10};
		
		\edge[style=dashed]{x10}{x3, x4, x5, x6};
		\end{tikzpicture}
		\caption{}
		\label{Fig:dling}
	\end{subfigure}
	\begin{subfigure}{.25\textwidth}
		\centering
		\begin{tikzpicture}[scale=0.55,transform shape,
		state/.style={circle,draw,thick,loop above,inner sep=0,minimum width=10}]
		\node[latent] (x1) {$x_{1}$} ; %
		\node[latent, below=of x1, xshift=-3cm, yshift=0.5cm] (x2) {$x_{2}$} ; %
		\node[latent, below=of x1, xshift=-1cm, yshift=-0.5cm] (x3) {$x_{3}$} ; %
		\node[latent, below=of x1, xshift=1cm, yshift=0.25cm] (x5) {$x_{5}$} ; %
		\node[latent, below=of x1, xshift=3cm, yshift=-0.25cm] (x8) {$x_{8}$} ; %
		
		\node[latent, below=of x2] (x7) {$x_{7}$} ; %
		\node[latent, below=of x3] (x4) {$x_{4}$} ; %
		\node[latent, below=of x5] (x9) {$x_{9}$} ; %
		\node[latent, below=of x8] (x10) {$x_{10}$} ; %
		
		\node[latent, below=of x4] (x6) {$x_{6}$} ; %
		
		\edge{x1}{x2};
		\edge[style=dashed]{x1}{x6};
		
		\edge[style=dashed]{x2}{x5, x8};
		\edge[style=dashed]{x3}{x1, x2, x8, x10, x5}
		\edge[style=dashed]{x4}{x1, x2, x8, x7, x10, x5, x3};
		\edge[style=dashed]{x6}{x2, x3, x8, x10, x9, x7, x5};
		\edge[style=dashed]{x7}{x8, x2, x9, x1, x5, x10, x3};
		\edge[style=dashed]{x8}{x1, x5, x9};
		\edge[style=dashed]{x9}{x2, x1, x5, x3, x4, x10};
		\edge[style=dashed]{x10}{x8, x2, x5, x1};
		\end{tikzpicture}
		\caption{}
		\label{Fig:pling}
	\end{subfigure}
	\caption{Examples of estimated results of (a) ENCI (b) ICA-LiNGAM (c) DirectLiNGAM (d) pairwiseLiNGAM.}
\end{figure*}

	\begin{table*}[!ht]
		\centering
		\caption{Accuracy of synthetic cause-effect pairs}
		\label{synnet}
		\begin{tabular}{c|cccccc}
			\toprule
			\multirow{3}{*}{Methods} & \multicolumn{4}{c}{TSG} & \multicolumn{2}{c}{MIPG} \\
			& \multicolumn{2}{c}{10 vars} & \multicolumn{2}{c}{50 vars} & \multicolumn{2}{c}{6 vars} \\
			& prc & rcl & prc & rcl & prc & rcl \\
			\midrule
			ENCI & \textbf{74.55} & 91.56 & \textbf{61.36} & 89.31 & \textbf{57.17} & 96.60  \\
			ECBP & 47.23 & 39.18 & 47.69 & 41.12 & 35.92 & \textbf{98.00} \\
			ICA-LiNGAM & 7.41 & 0.82 & 5.76 & 0.49 & 30.60 & 91.60 \\
			pairwiseLiNGAM & 16.82 & 84.11 & 3.65 & 91.16 & 13.47 & 40.40 \\
			DirectLiNGAM & 7.16 & 35.78 & 0.92 & 23.10 & 0.27 & 0.80 \\
			ANM & - & 24.33 & - & 26.42 & - & 6.60 \\
			PNL & - & 22.44 & - & 17.76 & - & 13.20 \\
			IGCI & - & \textbf{99.33} & - & \textbf{92.43} & - & 97.33  \\
			\bottomrule
		\end{tabular}
	\end{table*}

Next we conduct experiments on graphs that allow each variable to have multiple independent parent nodes. The experimental settings are similar to tree-structured case except that we generate 2000 data groups instead of 1000 and the ground truth of the synthetic network structure is fixed to be the graph on the right hand side of Figure~\ref{fig:exmp}. The mean precision and recall are given in the MIPG column of Table~\ref{synnet}. Note again that the results of ECBP are the mean of 20 independent experiments and that of other methods are the mean of 100 independent experiments.

The experimental results show a clear advantage of ENCI over ECBP, ICA-LiNGAM, pairwiseLiNGAM and DirectLiNGAM in estimating nonstationary causal graph. In both cases, ENCI achieves the highest precisions which are far higher than that of other methods. The recall of ENCI are also much higher compared with ICA-LiNGAM and DirectLiNGAM. Although in some cases ECBP and pairwiseLiNGAM return higher recall, their small precisions indicate that they find a large number of spurious edges, which makes their estimations less reliable. Comparing the recall of ENCI with ANM, PNL and IGCI, we find that ENCI still outperforms ANM and PNL. IGCI always performs the best among these four methods. Note that ANM, PNL and IGCI are not able to estimate the network structure and the recall of ENCI is relatively close to that of IGCI.

\subsection{Real Cause-effect Pairs}
This section and the next present the experimental results on real cause-effect pairs and causal graph, respectively. Note that experiments of applying ENCI on both real cause-effect pairs and real causal graphs are conducted on subsampled groups. In other words, we sampled data groups from the raw single data set to create the non-stationarity artificially and then applied ENCI on those randomly sampled groups to evaluate the performance of ENCI on real data.

We test the performance of ENCI on real world benchmark cause-effect pairs\footnote{https://webdav.tuebingen.mpg.de/cause-effect/.}. There are 106 pairs which come from 41 different data sets. Eight data sets are excluded in our experiment because they consists of either multivariate data or categorical data\footnote{Some of the existing methods or their implementations are not applicable to these data}. The corresponding pairs are of ID 47, 52, 53, 54, 55, 70, 71, 101 and 105. ENCI are compared with ANM \citep{hoyer2009nonlinear}, PNL \citep{zhang2009identifiability}, IGCI \citep{janzing2012information} and ECBP \citep{zhang2015discovery}. 

\begin{figure}[!hbtp]
	\begin{center}
		\includegraphics[width = 0.8\linewidth]{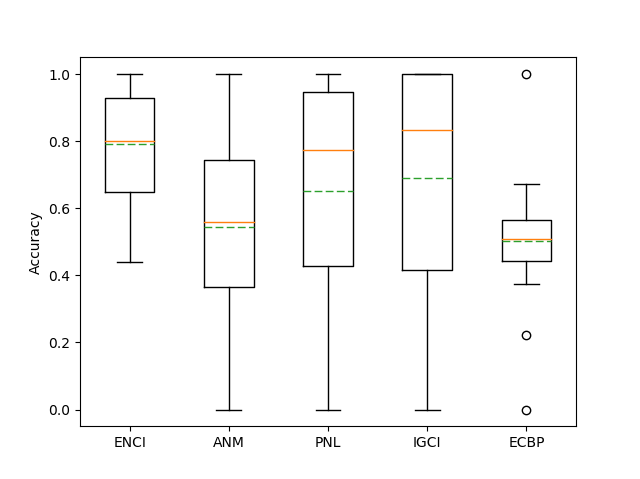}
	\end{center}
	\caption{Accuracy of methods on real world cause-effect pairs.}
	\label{real_pairs}
\end{figure}

We repeat 100 independent experiments for each pair and compute the percentage of correct inference. Then we compute the average percentage of pairs from the same source as the accuracy of the corresponding data set. In the experiment of ENCI, we apply ENCI on 90 groups, each of which consists of 50 to 60 points randomly sampled from the raw data without replacement. Four methods are directly applied on 90 points randomly sample from raw data without replacement in each experiment. Note that ENCI and IGCI is applied using different configurations, and the best result of each pair is adopted for evaluation. For ENCI, we test kernel width $d \in \{1/10d_{M}, 1/5d_{M}, 1/4d_{M}, 1/3d_{M}, 1/2d_{M}, d_{M}, 2d_{M}, 3d_{M}, 4d_{M}, 5d_{M}, 10d_{M} \}$, where $d_{M}$ is the median distance. For IGCI, we test different reference measures (i.e. uniform and gaussian) and estimators (i.e. entropy and integral estimation).

The summary of accuracies on 33 data sets of each method is given in Figure \ref{real_pairs} with orange solid line indicating median of accuracies and green dashed line indicating mean of accuracies. It shows that the performance of ENCI is satisfactory, with both median and mean accuracy about 79\%. IGCI also performs quite well, especially in terms of median, followed by PNL. ANM and ECBP performs poorly on these real cause-effect pairs, which might be due to their model restrictions. ENCI is much more stable than IGCI although its median accuracy is slightly lower. The results on real cause-effect pairs also indicate that ENCI could achieve satisfactory accuracy when applied on subgroups sampled from original data which does not strictly follow our non-stationary model.

\subsection{Real Causal Graph}
In this section, we test ENCI on a sociological data set from a data repository, General Social Survey\footnote{http://www.norc.org/GSS+Website/}.

This dataset consists of 6 observed variables, $x_{1}$: father's occupation level, $x_{2}$: son's income, $x_{3}$: father's education, $x_{4}$: son's occupation level, $x_{5}$: son's education, $x_{6}$: and number of siblings. We use the status attainment model based on domain knowledge~\citep{duncan1972socioeconomic} as the ground truth (see Figure~\ref{Fig:housing_gt}) and compare ENCI with ICA-LiNGAM, DirectLiNGAM and ECBP.
	\begin{figure}{}
		\centering
		\begin{tikzpicture}[scale=0.85,transform shape,
		state/.style={circle,draw,thick,loop above,inner sep=0,minimum width=10}]
		\node[latent] (x1) {$x_{1}$} ; %
		\node[latent, above=of x1, xshift=-1.5cm, yshift=-1cm] (x6) {$x_{6}$} ; %
		\node[latent, above=of x1, xshift=1.5cm, yshift=-1cm] (x3) {$x_{3}$} ; %
		\node[latent, below=of x1, xshift=-0.75cm] (x4) {$x_{4}$} ; %
		\node[latent, below=of x1, xshift=0.75cm] (x5) {$x_{5}$} ; %
		
		\node[latent, below=of x5, xshift=-0.75cm] (x2) {$x_{2}$} ; %
		
		\edge[] {x3}{x6};
		\edge[] {x6}{x3};
		\edge[] {x3}{x1};
		\edge[] {x1}{x3};
		\edge[] {x1}{x6};
		\edge[] {x6}{x1};
		
		\edge[] {x1, x3, x6}{x5} ;
		\edge[] {x1, x5, x6}{x4} ;
		\edge[] {x4, x5}{x2} ;
		
		\end{tikzpicture}
		\caption{Reference Graph of sociological dataset.}
		\label{Fig:housing_gt}
	\end{figure}
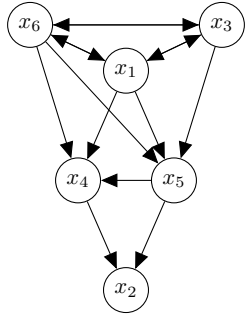

Before applying ENCI, we first adopt k-means++ \citep{arthur2007k} to cluster the original data into 15 clusters. In this way, we regard points within each cluster to be generated from the same causal model. Then we sample 1500 groups, which consists of 50 points sampled without replacement from each cluster with more than 50 points, and apply ENCI on these sampled groups. For ICA-LiNGAM, DirectLiNGAM and ECBP, we directly apply them on the original data set. We show one of the best results of ENCI (coefficient matrix $\mathbf{C}$ obtained from applying LiNGAM on $\tau_{x_{i}}, i=1, \dots, 6$) and the estimated graph from ICA-LiNGAM, DirectLiNGAM and ECBP in Figure~\ref{Fig:housing}.

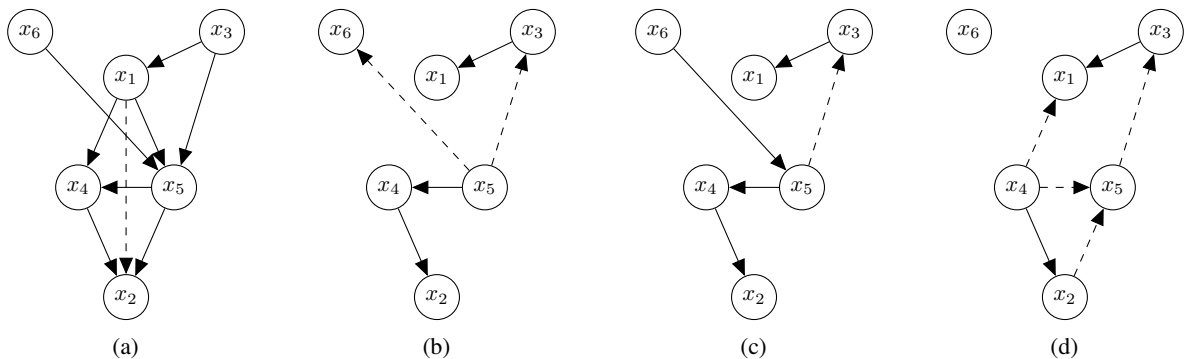
\begin{figure*}[!ht]
	\begin{subfigure}{.24\textwidth}
		\centering
		\begin{tikzpicture}[scale=0.85,transform shape,
		state/.style={circle,draw,thick,loop above,inner sep=0,minimum width=10}]
		\node[latent] (x1) {$x_{1}$} ; %
		\node[latent, above=of x1, xshift=-1.5cm, yshift=-1cm] (x6) {$x_{6}$} ; %
		\node[latent, above=of x1, xshift=1.5cm, yshift=-1cm] (x3) {$x_{3}$} ; %
		\node[latent, below=of x1, xshift=-0.75cm] (x4) {$x_{4}$} ; %
		\node[latent, below=of x1, xshift=0.75cm] (x5) {$x_{5}$} ; %
		
		\node[latent, below=of x5, xshift=-0.75cm] (x2) {$x_{2}$} ; %
		
		\edge[] {x3}{x1};
		\edge[] {x1, x3, x6}{x5} ;
		\edge[] {x1, x5}{x4} ;
		\edge[] {x4, x5}{x2} ;
		\edge[style=dashed] {x1}{x2}
		\end{tikzpicture}
		\caption{}
		\label{Fig:housing_enci}
	\end{subfigure}
	\begin{subfigure}{.25\textwidth}
		\centering
		\begin{tikzpicture}[scale=0.85,transform shape,
		state/.style={circle,draw,thick,loop above,inner sep=0,minimum width=10}]
		\node[latent] (x1) {$x_{1}$} ; %
		\node[latent, above=of x1, xshift=-1.5cm, yshift=-1cm] (x6) {$x_{6}$} ; %
		\node[latent, above=of x1, xshift=1.5cm, yshift=-1cm] (x3) {$x_{3}$} ; %
		\node[latent, below=of x1, xshift=-0.75cm] (x4) {$x_{4}$} ; %
		\node[latent, below=of x1, xshift=0.75cm] (x5) {$x_{5}$} ; %
		
		\node[latent, below=of x5, xshift=-0.75cm] (x2) {$x_{2}$} ; %
		
		\edge[] {x3}{x1} ;
		\edge[] {x5}{x4} ;
		\edge[] {x4}{x2} ;
		\edge[style=dashed] {x5}{x3, x6};
		\end{tikzpicture}
		\caption{}
		\label{Fig:housing_ling}
	\end{subfigure}
	\begin{subfigure}{.25\textwidth}
		\centering
		\begin{tikzpicture}[scale=0.85,transform shape,
		state/.style={circle,draw,thick,loop above,inner sep=0,minimum width=10}]
		\node[latent] (x1) {$x_{1}$} ; %
		\node[latent, above=of x1, xshift=-1.5cm, yshift=-1cm] (x6) {$x_{6}$} ; %
		\node[latent, above=of x1, xshift=1.5cm, yshift=-1cm] (x3) {$x_{3}$} ; %
		\node[latent, below=of x1, xshift=-0.75cm] (x4) {$x_{4}$} ; %
		\node[latent, below=of x1, xshift=0.75cm] (x5) {$x_{5}$} ; %
		
		\node[latent, below=of x5, xshift=-0.75cm] (x2) {$x_{2}$} ; %
		
		\edge[] {x3}{x1} ;
		\edge[] {x5}{x4} ;
		\edge[] {x4}{x2} ;
		\edge[] {x6}{x5} ;
		\edge[style=dashed] {x5}{x3};
		\end{tikzpicture}
		\caption{}
		\label{Fig:housing_dling}
	\end{subfigure}
	\begin{subfigure}{.24\textwidth}
		\centering
		\begin{tikzpicture}[scale=0.85,transform shape,
		state/.style={circle,draw,thick,loop above,inner sep=0,minimum width=10}]
		\node[latent] (x1) {$x_{1}$} ; %
		\node[latent, above=of x1, xshift=-1.5cm, yshift=-1cm] (x6) {$x_{6}$} ; %
		\node[latent, above=of x1, xshift=1.5cm, yshift=-1cm] (x3) {$x_{3}$} ; %
		\node[latent, below=of x1, xshift=-0.75cm] (x4) {$x_{4}$} ; %
		\node[latent, below=of x1, xshift=0.75cm] (x5) {$x_{5}$} ; %
		
		\node[latent, below=of x5, xshift=-0.75cm] (x2) {$x_{2}$} ; %
		
		\edge[] {x3}{x1};
		\edge[style=dashed] {x4}{x1};
		\edge[style=dashed] {x4, x2}{x5} ;
		\edge[style=dashed] {x5}{x3} ;
		\edge[] {x4}{x2}
		\end{tikzpicture}
		\caption{}
		\label{Fig:housing_ecbm}
	\end{subfigure}
	\caption{Estimated graph of (a) ENCI (b) ICA-LiNGAM (c) DirectLiNGAM (d) ECBP.}
	\label{Fig:housing}
\end{figure*}

ENCI outperforms ICA-LiNGAM, DirectLiNGAM which is consistent with our expectation since they are developed for linear stationary models. ENCI also outperforms ECBP which may be due to the lack of nonstationarity of the raw data. 

There are two facets of ENCI worth noting from the results of real data experiments of both pairs and causal graphs. First, ENCI is applied on subgroups sampled from the raw data since each set of real data is a single collection of observations and does not contain the form of nonstationarity our model assumes. However, the results of ENCI on real pairs is quiet competitive and it performs much better than LiNGAM family methods in real causal graph. This gives some evidence that our model could achieve satisfactory performance with subtle nonstationarity, which may be simply generated by subsampling a single data set. Second, the reference graph in the real graph experiment does not strictly fulfill the requirements of ENCI, but we obtain acceptable estimation results, which implies that ENCI may be applicable for other kinds of causal graphs.

\section*{Conclusion}
In this paper, we introduce the nonstationary causal model and prove the asymmetry of non-stationarity between the causal direction and anti-causal direction based on certain assumptions. By exploiting this asymmetry, we propose a reproducing kernel Hilbert space embedding-based method, ENCI, to infer the causal structure of both cause-effect pairs and two kinds of causal graphs. Theoretical analysis and experiments show the advantage of ENCI over existing methods based on fixed causal models when being applied on nonstationary passive observations. 

Compared with ECBP which is also for non-stationary causal model inference, the theoretical scope of application of ENCI is more restricted in the sense that we require non-stationarity in both $p(X)$ and $p(Y|X)$, whereas ECBP would also work when only one of them is nonstationary. In addition, ENCI requires nonstationarity exists in every variable of a causal graph, whereas ECBP only requires the existence of nonstationarity. However, ENCI outperforms ECBP on the experiments of both real cause-effect pairs and causal graph in which the data generating process does not strictly follow our model assumptions and the nonstationarity among artificial groups is subtle. Therefore, we deem that ENCI could be applied on a much wider scope of problems in reality and achieve satisfactory performance. In this way, ENCI is eligible to join a pool of state-of-the-art algorithms for learning general causal models.

\subsection*{Acknowledgments}
We would like to thank Biwei Huang and Kun Zhang for providing the code of Enhanced Constraint-based Procedure (ECBP).

\bibliographystyle{unsrt}
\bibliography{references}

\end{document}